\begin{document}

%
\runningtitle{Quilting Stochastic Kronecker Product Graphs to Generate
  Multiplicative Attribute Graphs}

%

\twocolumn[

\aistatstitle{Quilting Stochastic Kronecker Product Graphs to Generate
  \\ Multiplicative Attribute Graphs}

\aistatsauthor{ Hyokun Yun \And S.V$\!.\,$N. Vishwanathan}

\aistatsaddress{ Department of Statistics\\Purdue University \And
  Department of Statistics and Computer Science\\Purdue University } ]

\begin{abstract}
  We describe the first sub-quadratic sampling algorithm for
  the Multiplicative Attribute Graph Model (MAGM) of \citet{KimLes10}.  We
  exploit the close connection between MAGM and the Kronecker Product
  Graph Model (KPGM) of \citet{LesChaKleFaletal10}, and show that to
  sample a graph from a MAGM it suffices to sample small number of KPGM
  graphs and \emph{quilt} them together. Under a restricted set of 
  technical conditions
  our algorithm runs in $O\rbr{(\log_2(n))^3 \abr{E}}$ time, where $n$
  is the number of nodes and $\abr{E}$ is the number of edges in the
  sampled graph. We demonstrate the scalability of our algorithm via
  extensive empirical evaluation; we can sample a MAGM graph with 8
  million nodes and 20 billion edges in under 6 hours. 


\end{abstract}

\section{Introduction}
\label{sec:Introduction}

In this paper we are concerned with statistical models on
graphs. Typically one is interested in two aspects of graph models,
namely scalable inference and efficiency in generating samples. While
scalable inference is very important, an efficient sampling algorithm is
also critical:
\begin{itemize}
\item To assess the goodness of fit, one generates graphs from the model
  and compares graph statistics of the samples with the original graph
  \citep{HunGooHan08}.
\item To test whether a certain motif is overrepresented in the graph,
  one strategy is to sample large number of graphs in the null
  hypothesis to approximate the $p$-value of the test statistic
  \citep{SheMilManAlo02}.
\item To predict the future growth of the graph, one may fit the model
  on the current graph and generate a larger graph with the estimated
  parameters.
\end{itemize}

The stochastic Kronecker Product Graph Model (KPGM) was introduced by
\citet{LesChaKleFaletal10} as a scalable statistical model on graphs.
Compared to previous models such as Exponential Random Graph Models
(ERGMs) \citep{RobPatKalLus07} or latent factor models \citep{Hoff09},
KPGM sports a number of advantages. In particular, the inference
algorithm of KPGM is scalable to very large graphs, and sampling a graph
from the model takes time that is proportional to the expected number of
edges. This makes the KPGM a very attractive model to study. However,
\citet{MorNev09} report that KPGM fails to capture some characteristics
of real-world graphs, such as power-law degree distribution and local
clustering.

In order to address some of the above shortcomings and to generalize the
expressiveness of KPGM, \citet{KimLes10} recently proposed the
Multiplicative Attribute Graph Model (MAGM).  MAGM can provably model
the power-law degree distribution while KPGM can not. Furthermore,
\citet{KimLes10} demonstrate empirically that MAGM is better able to
capture graph statistics of real-world graphs \citep{KimLes11}.  The
issue of inference for MAGM is addressed by \citet{KimLes11} via an
efficient variational EM algorithm.

However, the issue of \emph{efficiently} sampling graphs from MAGM
remains open. Currently, all algorithms that we are aware of scale as
$O(n^2)$ in the worst case, where $n$ is the number of nodes. This is
because, the probability of observing an edge between two nodes is
determined by the so-called $n \times n$ edge probability matrix. Unlike
the case of KPGM where the edge probability matrix has a Kronecker
structure which can be exploited to efficiently sample graphs in
expected $O(\log_{2}(n) |E|)$ time, where $|E|$ is the number of
edges, no such results are known for MAGM.  In the absence of any
structure, naively sampling each entry of the adjacency matrix requires
$O(n^2)$ Bernoulli trials, which is prohibitively expensive for
generating real-world graphs with millions of nodes.


In this paper we show that under a restricted set of technical
conditions, with high
probability, a significant portion of the edge probability matrix of
MAGMs is the same as that of KPGMs (modulo permutations).  We then
exploit this observation to \emph{quilt} $O((\log_{2}(n))^2)$ graphs
sampled from a KPGM to form a single sample from a MAGM.  The expected
time complexity of our sampling scheme is thus $O( (\log_{2}(n))^3 
|E| )$.


\subsection{Notation and Preliminaries}
\label{sec:Notation}

A graph $G$ consists of an ordered set of $n$ nodes
$V=\{1,2, \ldots, n\}$, and a set of directed edges $E \subset V \!\times\!
V$. A node $i$ is said to be a neighbor of another node $j$ if
they are connected by an edge, that is, if $(i,j)\in E$.
Furthermore, for each edge $(i,j)$, $i$ is called the source node of the
edge, and $j$ the target node.  We define the adjacency matrix of graph
$G$ as the $n\times n$ matrix $A$ with $A_{ij} = 1$ if $(i,j) \in E$,
and $0$ otherwise.  Also, the following standard definition of Kronecker
product is used \citep{Bernstein05}:
\begin{definition}
  \label{def:kron}
  Given real matrices $X \in \RR^{n \times m}$ and $Y \in \RR^{p \times
    q}$, the Kronecker product $X \otimes Y \in \RR^{np \times mq}$ is
  \begin{align*}
    X \otimes Y := \mymatrix{cccc}{X_{11} Y & X_{12} Y & \ldots & X_{1m} Y \\
      \vdots & \vdots & \vdots & \vdots \\
      X_{n1} Y & X_{n2} Y & \ldots & X_{nm} Y}.
  \end{align*}
  The $k$-th Kronecker power $X^{\sbr{k}}$ is $\otimes_{i=1}^{k} X$.
\end{definition}

\section{Kronecker Product Graph Model}
\label{sec:kpgm}

The Stochastic Kronecker Product Graph Model of
\citet{LesChaKleFaletal10} is usually parametrized by a $2 \times 2$
initiator matrix
\begin{align}
  \label{eq:theta-def}
  \Theta :=
  \mymatrix{cc}{
    \theta_{00} & \theta_{01} \\
    \theta_{10} & \theta_{11}
  },
\end{align}
with each $\theta_{ij} \in [0, 1]$, and a size parameter $d \in
\mathbb{Z}^+$. For simplicity, let the number of nodes $n$ be $2^d$.
The case where $n < 2^d$ can be taken care by discarding $(2^d - n)$
number of nodes later as discussed in \citet{LesChaKleFaletal10},
but in our context we will only use $n = 2^d$ for
KPGM.

The $n \times n$ edge probability matrix $P$ is defined as the $d$-th
Kronecker power of the parameter $\Theta$, that is,
\begin{align}
  \label{eq:kpgm_p1}
  P = \Theta^{\sbr{d}} = \underbrace{\Theta \otimes \Theta \otimes \ldots
    \otimes \Theta}_{d \text{ times}}.
\end{align}
The probability of observing an edge between node $i$ and $j$ is simply
the $(i,j)$-th entry of $P$, henceforth denoted as $P_{ij}$.  See
Figure~\ref{fig:ep_matrix} (left) for an example of the edge probability
matrix of a KPGM, and observe its fractal structure which follows from
the definition of $P$.

\begin{figure}[h]
  \centering
  \begin{tikzpicture}[scale=1.3]
    \draw[draw, fill=black!6] (0.000000,2.400000) rectangle (0.300000, 2.100000);
\draw[draw, fill=black!11] (0.000000,2.100000) rectangle (0.300000, 1.800000);
\draw[draw, fill=black!11] (0.000000,1.800000) rectangle (0.300000, 1.500000);
\draw[draw, fill=black!19] (0.000000,1.500000) rectangle (0.300000, 1.200000);
\draw[draw, fill=black!11] (0.000000,1.200000) rectangle (0.300000, 0.900000);
\draw[draw, fill=black!19] (0.000000,0.900000) rectangle (0.300000, 0.600000);
\draw[draw, fill=black!19] (0.000000,0.600000) rectangle (0.300000, 0.300000);
\draw[draw, fill=black!34] (0.000000,0.300000) rectangle (0.300000, 0.000000);
\draw[draw, fill=black!11] (0.300000,2.400000) rectangle (0.600000, 2.100000);
\draw[draw, fill=black!14] (0.300000,2.100000) rectangle (0.600000, 1.800000);
\draw[draw, fill=black!19] (0.300000,1.800000) rectangle (0.600000, 1.500000);
\draw[draw, fill=black!25] (0.300000,1.500000) rectangle (0.600000, 1.200000);
\draw[draw, fill=black!19] (0.300000,1.200000) rectangle (0.600000, 0.900000);
\draw[draw, fill=black!25] (0.300000,0.900000) rectangle (0.600000, 0.600000);
\draw[draw, fill=black!34] (0.300000,0.600000) rectangle (0.600000, 0.300000);
\draw[draw, fill=black!44] (0.300000,0.300000) rectangle (0.600000, 0.000000);
\draw[draw, fill=black!11] (0.600000,2.400000) rectangle (0.900000, 2.100000);
\draw[draw, fill=black!19] (0.600000,2.100000) rectangle (0.900000, 1.800000);
\draw[draw, fill=black!14] (0.600000,1.800000) rectangle (0.900000, 1.500000);
\draw[draw, fill=black!25] (0.600000,1.500000) rectangle (0.900000, 1.200000);
\draw[draw, fill=black!19] (0.600000,1.200000) rectangle (0.900000, 0.900000);
\draw[draw, fill=black!34] (0.600000,0.900000) rectangle (0.900000, 0.600000);
\draw[draw, fill=black!25] (0.600000,0.600000) rectangle (0.900000, 0.300000);
\draw[draw, fill=black!44] (0.600000,0.300000) rectangle (0.900000, 0.000000);
\draw[draw, fill=black!19] (0.900000,2.400000) rectangle (1.200000, 2.100000);
\draw[draw, fill=black!25] (0.900000,2.100000) rectangle (1.200000, 1.800000);
\draw[draw, fill=black!25] (0.900000,1.800000) rectangle (1.200000, 1.500000);
\draw[draw, fill=black!32] (0.900000,1.500000) rectangle (1.200000, 1.200000);
\draw[draw, fill=black!34] (0.900000,1.200000) rectangle (1.200000, 0.900000);
\draw[draw, fill=black!44] (0.900000,0.900000) rectangle (1.200000, 0.600000);
\draw[draw, fill=black!44] (0.900000,0.600000) rectangle (1.200000, 0.300000);
\draw[draw, fill=black!56] (0.900000,0.300000) rectangle (1.200000, 0.000000);
\draw[draw, fill=black!11] (1.200000,2.400000) rectangle (1.500000, 2.100000);
\draw[draw, fill=black!19] (1.200000,2.100000) rectangle (1.500000, 1.800000);
\draw[draw, fill=black!19] (1.200000,1.800000) rectangle (1.500000, 1.500000);
\draw[draw, fill=black!34] (1.200000,1.500000) rectangle (1.500000, 1.200000);
\draw[draw, fill=black!14] (1.200000,1.200000) rectangle (1.500000, 0.900000);
\draw[draw, fill=black!25] (1.200000,0.900000) rectangle (1.500000, 0.600000);
\draw[draw, fill=black!25] (1.200000,0.600000) rectangle (1.500000, 0.300000);
\draw[draw, fill=black!44] (1.200000,0.300000) rectangle (1.500000, 0.000000);
\draw[draw, fill=black!19] (1.500000,2.400000) rectangle (1.800000, 2.100000);
\draw[draw, fill=black!25] (1.500000,2.100000) rectangle (1.800000, 1.800000);
\draw[draw, fill=black!34] (1.500000,1.800000) rectangle (1.800000, 1.500000);
\draw[draw, fill=black!44] (1.500000,1.500000) rectangle (1.800000, 1.200000);
\draw[draw, fill=black!25] (1.500000,1.200000) rectangle (1.800000, 0.900000);
\draw[draw, fill=black!32] (1.500000,0.900000) rectangle (1.800000, 0.600000);
\draw[draw, fill=black!44] (1.500000,0.600000) rectangle (1.800000, 0.300000);
\draw[draw, fill=black!56] (1.500000,0.300000) rectangle (1.800000, 0.000000);
\draw[draw, fill=black!19] (1.800000,2.400000) rectangle (2.100000, 2.100000);
\draw[draw, fill=black!34] (1.800000,2.100000) rectangle (2.100000, 1.800000);
\draw[draw, fill=black!25] (1.800000,1.800000) rectangle (2.100000, 1.500000);
\draw[draw, fill=black!44] (1.800000,1.500000) rectangle (2.100000, 1.200000);
\draw[draw, fill=black!25] (1.800000,1.200000) rectangle (2.100000, 0.900000);
\draw[draw, fill=black!44] (1.800000,0.900000) rectangle (2.100000, 0.600000);
\draw[draw, fill=black!32] (1.800000,0.600000) rectangle (2.100000, 0.300000);
\draw[draw, fill=black!56] (1.800000,0.300000) rectangle (2.100000, 0.000000);
\draw[draw, fill=black!34] (2.100000,2.400000) rectangle (2.400000, 2.100000);
\draw[draw, fill=black!44] (2.100000,2.100000) rectangle (2.400000, 1.800000);
\draw[draw, fill=black!44] (2.100000,1.800000) rectangle (2.400000, 1.500000);
\draw[draw, fill=black!56] (2.100000,1.500000) rectangle (2.400000, 1.200000);
\draw[draw, fill=black!44] (2.100000,1.200000) rectangle (2.400000, 0.900000);
\draw[draw, fill=black!56] (2.100000,0.900000) rectangle (2.400000, 0.600000);
\draw[draw, fill=black!56] (2.100000,0.600000) rectangle (2.400000, 0.300000);
\draw[draw, fill=black!72] (2.100000,0.300000) rectangle (2.400000, 0.000000);
  \end{tikzpicture}
  \;\;\;
  \begin{tikzpicture}[scale=1.3]
    \draw[draw, fill=black!32] (0.000000,2.400000) rectangle (0.300000, 2.100000);
\draw[draw, fill=black!44] (0.000000,2.100000) rectangle (0.300000, 1.800000);
\draw[draw, fill=black!44] (0.000000,1.800000) rectangle (0.300000, 1.500000);
\draw[draw, fill=black!34] (0.000000,1.500000) rectangle (0.300000, 1.200000);
\draw[draw, fill=black!34] (0.000000,1.200000) rectangle (0.300000, 0.900000);
\draw[draw, fill=black!19] (0.000000,0.900000) rectangle (0.300000, 0.600000);
\draw[draw, fill=black!25] (0.000000,0.600000) rectangle (0.300000, 0.300000);
\draw[draw, fill=black!25] (0.000000,0.300000) rectangle (0.300000, 0.000000);
\draw[draw, fill=black!44] (0.300000,2.400000) rectangle (0.600000, 2.100000);
\draw[draw, fill=black!32] (0.300000,2.100000) rectangle (0.600000, 1.800000);
\draw[draw, fill=black!44] (0.300000,1.800000) rectangle (0.600000, 1.500000);
\draw[draw, fill=black!25] (0.300000,1.500000) rectangle (0.600000, 1.200000);
\draw[draw, fill=black!25] (0.300000,1.200000) rectangle (0.600000, 0.900000);
\draw[draw, fill=black!19] (0.300000,0.900000) rectangle (0.600000, 0.600000);
\draw[draw, fill=black!34] (0.300000,0.600000) rectangle (0.600000, 0.300000);
\draw[draw, fill=black!34] (0.300000,0.300000) rectangle (0.600000, 0.000000);
\draw[draw, fill=black!44] (0.600000,2.400000) rectangle (0.900000, 2.100000);
\draw[draw, fill=black!44] (0.600000,2.100000) rectangle (0.900000, 1.800000);
\draw[draw, fill=black!32] (0.600000,1.800000) rectangle (0.900000, 1.500000);
\draw[draw, fill=black!25] (0.600000,1.500000) rectangle (0.900000, 1.200000);
\draw[draw, fill=black!25] (0.600000,1.200000) rectangle (0.900000, 0.900000);
\draw[draw, fill=black!19] (0.600000,0.900000) rectangle (0.900000, 0.600000);
\draw[draw, fill=black!25] (0.600000,0.600000) rectangle (0.900000, 0.300000);
\draw[draw, fill=black!25] (0.600000,0.300000) rectangle (0.900000, 0.000000);
\draw[draw, fill=black!34] (0.900000,2.400000) rectangle (1.200000, 2.100000);
\draw[draw, fill=black!25] (0.900000,2.100000) rectangle (1.200000, 1.800000);
\draw[draw, fill=black!25] (0.900000,1.800000) rectangle (1.200000, 1.500000);
\draw[draw, fill=black!14] (0.900000,1.500000) rectangle (1.200000, 1.200000);
\draw[draw, fill=black!14] (0.900000,1.200000) rectangle (1.200000, 0.900000);
\draw[draw, fill=black!11] (0.900000,0.900000) rectangle (1.200000, 0.600000);
\draw[draw, fill=black!19] (0.900000,0.600000) rectangle (1.200000, 0.300000);
\draw[draw, fill=black!19] (0.900000,0.300000) rectangle (1.200000, 0.000000);
\draw[draw, fill=black!34] (1.200000,2.400000) rectangle (1.500000, 2.100000);
\draw[draw, fill=black!25] (1.200000,2.100000) rectangle (1.500000, 1.800000);
\draw[draw, fill=black!25] (1.200000,1.800000) rectangle (1.500000, 1.500000);
\draw[draw, fill=black!14] (1.200000,1.500000) rectangle (1.500000, 1.200000);
\draw[draw, fill=black!14] (1.200000,1.200000) rectangle (1.500000, 0.900000);
\draw[draw, fill=black!11] (1.200000,0.900000) rectangle (1.500000, 0.600000);
\draw[draw, fill=black!19] (1.200000,0.600000) rectangle (1.500000, 0.300000);
\draw[draw, fill=black!19] (1.200000,0.300000) rectangle (1.500000, 0.000000);
\draw[draw, fill=black!19] (1.500000,2.400000) rectangle (1.800000, 2.100000);
\draw[draw, fill=black!19] (1.500000,2.100000) rectangle (1.800000, 1.800000);
\draw[draw, fill=black!19] (1.500000,1.800000) rectangle (1.800000, 1.500000);
\draw[draw, fill=black!11] (1.500000,1.500000) rectangle (1.800000, 1.200000);
\draw[draw, fill=black!11] (1.500000,1.200000) rectangle (1.800000, 0.900000);
\draw[draw, fill=black!6] (1.500000,0.900000) rectangle (1.800000, 0.600000);
\draw[draw, fill=black!11] (1.500000,0.600000) rectangle (1.800000, 0.300000);
\draw[draw, fill=black!11] (1.500000,0.300000) rectangle (1.800000, 0.000000);
\draw[draw, fill=black!25] (1.800000,2.400000) rectangle (2.100000, 2.100000);
\draw[draw, fill=black!34] (1.800000,2.100000) rectangle (2.100000, 1.800000);
\draw[draw, fill=black!25] (1.800000,1.800000) rectangle (2.100000, 1.500000);
\draw[draw, fill=black!19] (1.800000,1.500000) rectangle (2.100000, 1.200000);
\draw[draw, fill=black!19] (1.800000,1.200000) rectangle (2.100000, 0.900000);
\draw[draw, fill=black!11] (1.800000,0.900000) rectangle (2.100000, 0.600000);
\draw[draw, fill=black!14] (1.800000,0.600000) rectangle (2.100000, 0.300000);
\draw[draw, fill=black!14] (1.800000,0.300000) rectangle (2.100000, 0.000000);
\draw[draw, fill=black!25] (2.100000,2.400000) rectangle (2.400000, 2.100000);
\draw[draw, fill=black!34] (2.100000,2.100000) rectangle (2.400000, 1.800000);
\draw[draw, fill=black!25] (2.100000,1.800000) rectangle (2.400000, 1.500000);
\draw[draw, fill=black!19] (2.100000,1.500000) rectangle (2.400000, 1.200000);
\draw[draw, fill=black!19] (2.100000,1.200000) rectangle (2.400000, 0.900000);
\draw[draw, fill=black!11] (2.100000,0.900000) rectangle (2.400000, 0.600000);
\draw[draw, fill=black!14] (2.100000,0.600000) rectangle (2.400000, 0.300000);
\draw[draw, fill=black!14] (2.100000,0.300000) rectangle (2.400000, 0.000000);
  \end{tikzpicture}
  \caption{Examples of edge probability matrix (Left: KPGM, Right:
    MAGM).  Darker cells imply a higher probability of observing an
    edge.  On the left, one can see the fractal-like Kronecker
    structure; dividing the matrix into four equal
    parts yields four sub-matrices which up to a scaling look exactly
    the same.
  }

  \label{fig:ep_matrix}
\end{figure}
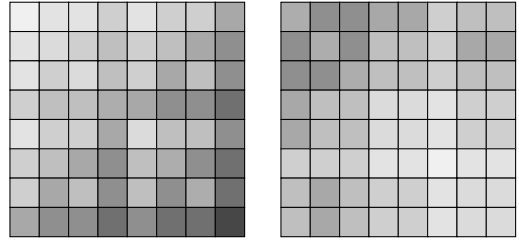

Note that one can generalize the model in two ways
\citep{LesChaKleFaletal10}. First, one can use larger initiator
matrices. Second, different initiator matrices $\Theta^{(1)},
\Theta^{(2)}, \ldots, \Theta^{(d)}$ can be used at each level. In this
case, the edge probability matrix becomes
\begin{align}
  \label{eq:kpgm_p2}
  P = \Theta^{(1)} \otimes \Theta^{(2)} \otimes \cdots \otimes
  \Theta^{(d)}.
\end{align}
In this paper we will work with \eqref{eq:kpgm_p2} because it is closer
in spirit to the MAGM which we will introduce later. For notational
simplicity, we will denote
\begin{align}
  \label{eq:thetat-def}
  \Thetat := \cbr{ \Theta^{(1)}, \Theta^{(2)}, \ldots, \Theta^{(d)}}.
\end{align}

\subsection{Sampling}
\label{sec:Sampling}

The straightforward way to sample a graph from a KPGM is to
\emph{individually} sample each entry $A_{ij}$ of the adjacency matrix
$A$ independently.  This is because, given the parameter matrix
$\Theta$, the event of observing an edge between nodes $i$ and $j$
is independent of observing an edge between nodes
$i'$ and $j'$ for $(i, j) \neq (i', j')$. Thus one can view the
adjacency matrix $A$ as a $n \times n$ array of 
independent Bernoulli random
variables, with $\PP\rbr{ A_{ij} = 1 \mid \Theta } = P_{ij}$.  Such an
algorithm requires $O(n^2)$ effort.

However, the Kronecker structure in the edge probability matrix $P$ can be
exploited to sample a graph in expected $O(\log_{2}(n) |E|)$ time.
The idea of Algorithm~\ref{alg:kpgmsample} suggested by
\citet{LesChaKleFaletal10} is as follows:

The algorithm first determines the number of edges in the graph.  Since
the number of edges $|E| = \sum_{i,j} A_{ij}$ is the sum of $n^2$
independent Bernoulli random variables, it approximately follows the
normal distribution for large $n$.  Thus, one can sample the number of
edges according to this normal distribution.

Next, the algorithm samples each individual edge according to the
following recursive scheme.  Let $S$ denote the set of candidate nodes
for the source and $T$ the candidate target nodes. Initially both $S$
and $T$ are $\{1,2,\ldots,n\}$.  Using the matrix $\Theta$, the
proportion of expected number of edges in each quadrisection
(north-west, north-east, south-west and south-east) of the adjacency
matrix can be computed via
\begin{align}
  \label{eq:quadrant-exp-edges}
  \sum_{i=an/2+1}^{(a+1)n/2} \sum_{j=bn/2+1}^{(b+1)n/2} P_{ij} \propto
  \theta^{(1)}_{ab}, \;\; 0 \leq a,b \leq 1.
\end{align}
Then one can sample a pair of integers $(a,b), 0 \leq a,b \leq 1$, with
the probability of $(a,b)$ proportional to $\theta_{ab}$, to reduce $S$
and $T$ to $\{ an/2+1, an/2+2, \ldots, (a+1)n/2 \}$ and $\{ bn/2+1,
bn/2+2, \ldots, (b+1)n/2 \}$, respectively.  Due to the Kronecker
structure of the edge probability matrix $P$, repeating this
quadrisection procedure $d$ times reduces both $S$ and $T$ to single
nodes $S=\{i\}$ and $T=\{j\}$, which are now connected by an edge. There
is a small non-zero probability that the same edge is sampled multiple
times. In this case the generated edge is rejected and a new edge is
sampled (see pseudo-code in Algorithm~\ref{alg:kpgmsample}).

\begin{algorithm}
  \caption{Sampling Algorithm of Stochastic Kronecker Graphs}\label{alg:kpgmsample}
  \begin{algorithmic}[1]
    \Procedure {KPGMSample}{$\Thetat$}
    \State $E \gets \emptyset$
    \State $m \gets \prod_{k=1}^d(\theta^{(k)}_{00} +\theta^{(k)}_{01}
    +\theta^{(k)}_{10}+\theta^{(k)}_{11})$
    \State $v \gets \prod_{k=1}^d(
    (\theta^{(k)}_{00})^2 +(\theta^{(k)}_{01})^2
    +(\theta^{(k)}_{10})^2+(\theta^{(k)}_{11})^2)$
    \State Generate $X \sim \mathcal{N}(
    m, m-v)$
    \For{$x=1$ to $X$}
    \State $S_{start}, T_{start} \gets 1$
    \State $S_{end}, T_{end} \gets n$
    \For{$k \gets 1$ to $d$}
    \State Sample $(a,b) \propto \theta_{ab}^{(k)}$
    \State $S_{start} \gets S_{start} + a \left(\frac n {2^k}\right)$.
    \State $T_{start} \gets T_{start} + b \left(\frac n {2^k} \right)$.
    \State $S_{end} \gets S_{end} - (1-a) \left(\frac n {2^k}\right)$.
    \State $T_{end} \gets T_{end} - (1-b) \left(\frac n {2^k}\right)$.
    \EndFor
    \State {\# We have $S_{start} = S_{end}$,
    $T_{start} = T_{end}$}
    \State $E \gets E \cup \{ (S_{start}, T_{start}) \}$
    \EndFor
    \State \textbf{return} $E$
    \EndProcedure
  \end{algorithmic}
\end{algorithm}

\section{Multiplicative Attribute Graph Model}
\label{sec:mag}

An alternate way to view KPGM is as follows: Associate the $i$-th node
with a bit-vector $b(i)$ of length $\log_{2}(n)$ such that $b_{k}(i)$ is
the $k$-th digit of integer $(i-1)$ in its binary representation. Then
one can verify that the $(i,j)$-th entry of the edge probability matrix
$P$ in \eqref{eq:kpgm_p2} can be written as
\begin{align}
  \label{eq:kpgm_prob}
  P_{ij} = \prod_{k=1}^d \theta^{(k)}_{b_k(i) \; b_k(j)}.
\end{align}
Under this interpretation, one may consider $b_k(i) = 1$ (resp.\ $b_k(i)
= 0$) as denoting the presence (resp.\ absence) of the $k$-th attribute
in node $i$. The factor $\theta_{b_k(i) \; b_k(j)}^{(k)}$ denotes the
probability of an edge between nodes $i$ and $j$ based on the value of
their $k$-th attribute. The attributes are assumed independent, and
therefore the overall probability of an edge between $i$ and $j$ is just
the product of $\theta_{b_k(i) \; b_k(j)}^{(k)}$'s.

The Multiplicative Attribute Graph Model (MAGM) of \citet{KimLes10} is
also obtained by associating a bit-vector $f(i)$ with a node
$i$. However, $f(i)$ need not be the binary representation of $(i-1)$ as
was the case in the KPGM. In fact, $f(i)$ need not even be of length
$\log_{2}(n)$. We simply assume that $f_{k}(i)$ is a Bernoulli random
variable with $\PP\rbr{f_{k}(i)=1} = \mu^{(k)}$. In addition to
$\Thetat$ defined in \eqref{eq:thetat-def}, the model now has additional
parameters $\mut := \cbr{\mu^{(1)}, \mu^{(2)}, \ldots, \mu^{(d)}}$, and
the $(i,j)$-th entry of the edge probability matrix $Q$ is written as
\begin{align}
  \label{eq:mag_prob}
  Q_{ij} = \prod_{k=1}^d \theta^{(k)}_{f_k(i) \; f_k(j)}.
\end{align}

\section{Quilting Algorithm}
\label{sec:algorithm}

A close examination of \eqref{eq:kpgm_prob} and \eqref{eq:mag_prob}
reveals that KPGM and MAGM are very related. The only difference is that
in the case of the KPGM the $i$-th node is mapped to the bit vector
corresponding to $(i-1)$ while in the case of MAGM it is mapped to an
integer $\lambda_i$ (not necessarily $(i-1)$) whose bit vector
representation is $f(i)$. We will call $\lambda_i$ the \emph{attribute
  configuration} of node $i$ in the sequel.

For ease of theoretical analysis and in order to convey our main ideas
we will initially assume that $d = \log_{2}(n)$, that is, we assume that
$f(i)$ is of length $\log_{2}(n)$ or equivalently $0 \leq \lambda_i < n$
(this assumption will be relaxed in Section \ref{sec:case_logn_neq_2d}).
Under the above assumption, every entry of $Q$ has a corresponding
counterpart in $P$ because
\begin{align}
  \label{eq:kpgm_mag_con}
  Q_{ij} = P_{\lambda_{i} \; \lambda_{j}}.
\end{align}
The key difficulty in sampling graphs from MAGM arises because the
attribute configuration associated with different nodes need not be
unique. To sidestep this issue we partition the nodes into $B$ sets such
that no two nodes in a set share the same attribute configuration.


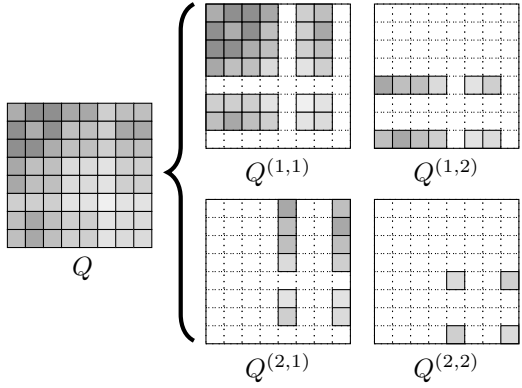
\begin{figure}[htbp]
  \centering
  \begin{tikzpicture}[scale=0.8]
    \begin{scope}[yshift = 0cm, xshift = 0cm]
      \draw[draw, fill=black!32] (0.000000,2.400000) rectangle (0.300000, 2.100000);
\draw[draw, fill=black!44] (0.000000,2.100000) rectangle (0.300000, 1.800000);
\draw[draw, fill=black!44] (0.000000,1.800000) rectangle (0.300000, 1.500000);
\draw[draw, fill=black!34] (0.000000,1.500000) rectangle (0.300000, 1.200000);
\draw[draw, fill=black!34] (0.000000,1.200000) rectangle (0.300000, 0.900000);
\draw[draw, fill=black!19] (0.000000,0.900000) rectangle (0.300000, 0.600000);
\draw[draw, fill=black!25] (0.000000,0.600000) rectangle (0.300000, 0.300000);
\draw[draw, fill=black!25] (0.000000,0.300000) rectangle (0.300000, 0.000000);
\draw[draw, fill=black!44] (0.300000,2.400000) rectangle (0.600000, 2.100000);
\draw[draw, fill=black!32] (0.300000,2.100000) rectangle (0.600000, 1.800000);
\draw[draw, fill=black!44] (0.300000,1.800000) rectangle (0.600000, 1.500000);
\draw[draw, fill=black!25] (0.300000,1.500000) rectangle (0.600000, 1.200000);
\draw[draw, fill=black!25] (0.300000,1.200000) rectangle (0.600000, 0.900000);
\draw[draw, fill=black!19] (0.300000,0.900000) rectangle (0.600000, 0.600000);
\draw[draw, fill=black!34] (0.300000,0.600000) rectangle (0.600000, 0.300000);
\draw[draw, fill=black!34] (0.300000,0.300000) rectangle (0.600000, 0.000000);
\draw[draw, fill=black!44] (0.600000,2.400000) rectangle (0.900000, 2.100000);
\draw[draw, fill=black!44] (0.600000,2.100000) rectangle (0.900000, 1.800000);
\draw[draw, fill=black!32] (0.600000,1.800000) rectangle (0.900000, 1.500000);
\draw[draw, fill=black!25] (0.600000,1.500000) rectangle (0.900000, 1.200000);
\draw[draw, fill=black!25] (0.600000,1.200000) rectangle (0.900000, 0.900000);
\draw[draw, fill=black!19] (0.600000,0.900000) rectangle (0.900000, 0.600000);
\draw[draw, fill=black!25] (0.600000,0.600000) rectangle (0.900000, 0.300000);
\draw[draw, fill=black!25] (0.600000,0.300000) rectangle (0.900000, 0.000000);
\draw[draw, fill=black!34] (0.900000,2.400000) rectangle (1.200000, 2.100000);
\draw[draw, fill=black!25] (0.900000,2.100000) rectangle (1.200000, 1.800000);
\draw[draw, fill=black!25] (0.900000,1.800000) rectangle (1.200000, 1.500000);
\draw[draw, fill=black!14] (0.900000,1.500000) rectangle (1.200000, 1.200000);
\draw[draw, fill=black!14] (0.900000,1.200000) rectangle (1.200000, 0.900000);
\draw[draw, fill=black!11] (0.900000,0.900000) rectangle (1.200000, 0.600000);
\draw[draw, fill=black!19] (0.900000,0.600000) rectangle (1.200000, 0.300000);
\draw[draw, fill=black!19] (0.900000,0.300000) rectangle (1.200000, 0.000000);
\draw[draw, fill=black!34] (1.200000,2.400000) rectangle (1.500000, 2.100000);
\draw[draw, fill=black!25] (1.200000,2.100000) rectangle (1.500000, 1.800000);
\draw[draw, fill=black!25] (1.200000,1.800000) rectangle (1.500000, 1.500000);
\draw[draw, fill=black!14] (1.200000,1.500000) rectangle (1.500000, 1.200000);
\draw[draw, fill=black!14] (1.200000,1.200000) rectangle (1.500000, 0.900000);
\draw[draw, fill=black!11] (1.200000,0.900000) rectangle (1.500000, 0.600000);
\draw[draw, fill=black!19] (1.200000,0.600000) rectangle (1.500000, 0.300000);
\draw[draw, fill=black!19] (1.200000,0.300000) rectangle (1.500000, 0.000000);
\draw[draw, fill=black!19] (1.500000,2.400000) rectangle (1.800000, 2.100000);
\draw[draw, fill=black!19] (1.500000,2.100000) rectangle (1.800000, 1.800000);
\draw[draw, fill=black!19] (1.500000,1.800000) rectangle (1.800000, 1.500000);
\draw[draw, fill=black!11] (1.500000,1.500000) rectangle (1.800000, 1.200000);
\draw[draw, fill=black!11] (1.500000,1.200000) rectangle (1.800000, 0.900000);
\draw[draw, fill=black!6] (1.500000,0.900000) rectangle (1.800000, 0.600000);
\draw[draw, fill=black!11] (1.500000,0.600000) rectangle (1.800000, 0.300000);
\draw[draw, fill=black!11] (1.500000,0.300000) rectangle (1.800000, 0.000000);
\draw[draw, fill=black!25] (1.800000,2.400000) rectangle (2.100000, 2.100000);
\draw[draw, fill=black!34] (1.800000,2.100000) rectangle (2.100000, 1.800000);
\draw[draw, fill=black!25] (1.800000,1.800000) rectangle (2.100000, 1.500000);
\draw[draw, fill=black!19] (1.800000,1.500000) rectangle (2.100000, 1.200000);
\draw[draw, fill=black!19] (1.800000,1.200000) rectangle (2.100000, 0.900000);
\draw[draw, fill=black!11] (1.800000,0.900000) rectangle (2.100000, 0.600000);
\draw[draw, fill=black!14] (1.800000,0.600000) rectangle (2.100000, 0.300000);
\draw[draw, fill=black!14] (1.800000,0.300000) rectangle (2.100000, 0.000000);
\draw[draw, fill=black!25] (2.100000,2.400000) rectangle (2.400000, 2.100000);
\draw[draw, fill=black!34] (2.100000,2.100000) rectangle (2.400000, 1.800000);
\draw[draw, fill=black!25] (2.100000,1.800000) rectangle (2.400000, 1.500000);
\draw[draw, fill=black!19] (2.100000,1.500000) rectangle (2.400000, 1.200000);
\draw[draw, fill=black!19] (2.100000,1.200000) rectangle (2.400000, 0.900000);
\draw[draw, fill=black!11] (2.100000,0.900000) rectangle (2.400000, 0.600000);
\draw[draw, fill=black!14] (2.100000,0.600000) rectangle (2.400000, 0.300000);
\draw[draw, fill=black!14] (2.100000,0.300000) rectangle (2.400000, 0.000000);
      \draw[below] (1.25, 0) node{{$Q$}};
    \end{scope}
    \draw [decoration={brace, amplitude=10pt}, decorate,
    ultra thick, black] (3.1,-1.55) -- (3.1,4.05);
    \begin{scope}[yshift = 1.65cm, xshift = 3.3cm]
      \draw[draw, fill=black!32] (0.000000,2.400000) rectangle (0.300000, 2.100000);
\draw[draw, fill=black!44] (0.000000,2.100000) rectangle (0.300000, 1.800000);
\draw[draw, fill=black!44] (0.000000,1.800000) rectangle (0.300000, 1.500000);
\draw[draw, fill=black!34] (0.000000,1.500000) rectangle (0.300000, 1.200000);
\draw[dashed, style=dotted] (0.000000,1.200000) rectangle (0.300000, 0.900000);
\draw[draw, fill=black!19] (0.000000,0.900000) rectangle (0.300000, 0.600000);
\draw[draw, fill=black!25] (0.000000,0.600000) rectangle (0.300000, 0.300000);
\draw[dashed, style=dotted] (0.000000,0.300000) rectangle (0.300000, 0.000000);
\draw[draw, fill=black!44] (0.300000,2.400000) rectangle (0.600000, 2.100000);
\draw[draw, fill=black!32] (0.300000,2.100000) rectangle (0.600000, 1.800000);
\draw[draw, fill=black!44] (0.300000,1.800000) rectangle (0.600000, 1.500000);
\draw[draw, fill=black!25] (0.300000,1.500000) rectangle (0.600000, 1.200000);
\draw[dashed, style=dotted] (0.300000,1.200000) rectangle (0.600000, 0.900000);
\draw[draw, fill=black!19] (0.300000,0.900000) rectangle (0.600000, 0.600000);
\draw[draw, fill=black!34] (0.300000,0.600000) rectangle (0.600000, 0.300000);
\draw[dashed, style=dotted] (0.300000,0.300000) rectangle (0.600000, 0.000000);
\draw[draw, fill=black!44] (0.600000,2.400000) rectangle (0.900000, 2.100000);
\draw[draw, fill=black!44] (0.600000,2.100000) rectangle (0.900000, 1.800000);
\draw[draw, fill=black!32] (0.600000,1.800000) rectangle (0.900000, 1.500000);
\draw[draw, fill=black!25] (0.600000,1.500000) rectangle (0.900000, 1.200000);
\draw[dashed, style=dotted] (0.600000,1.200000) rectangle (0.900000, 0.900000);
\draw[draw, fill=black!19] (0.600000,0.900000) rectangle (0.900000, 0.600000);
\draw[draw, fill=black!25] (0.600000,0.600000) rectangle (0.900000, 0.300000);
\draw[dashed, style=dotted] (0.600000,0.300000) rectangle (0.900000, 0.000000);
\draw[draw, fill=black!34] (0.900000,2.400000) rectangle (1.200000, 2.100000);
\draw[draw, fill=black!25] (0.900000,2.100000) rectangle (1.200000, 1.800000);
\draw[draw, fill=black!25] (0.900000,1.800000) rectangle (1.200000, 1.500000);
\draw[draw, fill=black!14] (0.900000,1.500000) rectangle (1.200000, 1.200000);
\draw[dashed, style=dotted] (0.900000,1.200000) rectangle (1.200000, 0.900000);
\draw[draw, fill=black!11] (0.900000,0.900000) rectangle (1.200000, 0.600000);
\draw[draw, fill=black!19] (0.900000,0.600000) rectangle (1.200000, 0.300000);
\draw[dashed, style=dotted] (0.900000,0.300000) rectangle (1.200000, 0.000000);
\draw[dashed, style=dotted] (1.200000,2.400000) rectangle (1.500000, 2.100000);
\draw[dashed, style=dotted] (1.200000,2.100000) rectangle (1.500000, 1.800000);
\draw[dashed, style=dotted] (1.200000,1.800000) rectangle (1.500000, 1.500000);
\draw[dashed, style=dotted] (1.200000,1.500000) rectangle (1.500000, 1.200000);
\draw[dashed, style=dotted] (1.200000,1.200000) rectangle (1.500000, 0.900000);
\draw[dashed, style=dotted] (1.200000,0.900000) rectangle (1.500000, 0.600000);
\draw[dashed, style=dotted] (1.200000,0.600000) rectangle (1.500000, 0.300000);
\draw[dashed, style=dotted] (1.200000,0.300000) rectangle (1.500000, 0.000000);
\draw[draw, fill=black!19] (1.500000,2.400000) rectangle (1.800000, 2.100000);
\draw[draw, fill=black!19] (1.500000,2.100000) rectangle (1.800000, 1.800000);
\draw[draw, fill=black!19] (1.500000,1.800000) rectangle (1.800000, 1.500000);
\draw[draw, fill=black!11] (1.500000,1.500000) rectangle (1.800000, 1.200000);
\draw[dashed, style=dotted] (1.500000,1.200000) rectangle (1.800000, 0.900000);
\draw[draw, fill=black!6] (1.500000,0.900000) rectangle (1.800000, 0.600000);
\draw[draw, fill=black!11] (1.500000,0.600000) rectangle (1.800000, 0.300000);
\draw[dashed, style=dotted] (1.500000,0.300000) rectangle (1.800000, 0.000000);
\draw[draw, fill=black!25] (1.800000,2.400000) rectangle (2.100000, 2.100000);
\draw[draw, fill=black!34] (1.800000,2.100000) rectangle (2.100000, 1.800000);
\draw[draw, fill=black!25] (1.800000,1.800000) rectangle (2.100000, 1.500000);
\draw[draw, fill=black!19] (1.800000,1.500000) rectangle (2.100000, 1.200000);
\draw[dashed, style=dotted] (1.800000,1.200000) rectangle (2.100000, 0.900000);
\draw[draw, fill=black!11] (1.800000,0.900000) rectangle (2.100000, 0.600000);
\draw[draw, fill=black!14] (1.800000,0.600000) rectangle (2.100000, 0.300000);
\draw[dashed, style=dotted] (1.800000,0.300000) rectangle (2.100000, 0.000000);
\draw[dashed, style=dotted] (2.100000,2.400000) rectangle (2.400000, 2.100000);
\draw[dashed, style=dotted] (2.100000,2.100000) rectangle (2.400000, 1.800000);
\draw[dashed, style=dotted] (2.100000,1.800000) rectangle (2.400000, 1.500000);
\draw[dashed, style=dotted] (2.100000,1.500000) rectangle (2.400000, 1.200000);
\draw[dashed, style=dotted] (2.100000,1.200000) rectangle (2.400000, 0.900000);
\draw[dashed, style=dotted] (2.100000,0.900000) rectangle (2.400000, 0.600000);
\draw[dashed, style=dotted] (2.100000,0.600000) rectangle (2.400000, 0.300000);
\draw[dashed, style=dotted] (2.100000,0.300000) rectangle (2.400000, 0.000000);
\draw[draw] (0.000000,0.000000) rectangle (2.400000, 2.400000);
\draw[below] (1.200000,0) node {$Q^{(1,1)}$};
    \end{scope}
    \begin{scope}[yshift = 1.65cm, xshift = 6.1cm]
       \draw[dashed, style=dotted] (0.000000,2.400000) rectangle (0.300000, 2.100000);
\draw[dashed, style=dotted] (0.000000,2.100000) rectangle (0.300000, 1.800000);
\draw[dashed, style=dotted] (0.000000,1.800000) rectangle (0.300000, 1.500000);
\draw[dashed, style=dotted] (0.000000,1.500000) rectangle (0.300000, 1.200000);
\draw[draw, fill=black!34] (0.000000,1.200000) rectangle (0.300000, 0.900000);
\draw[dashed, style=dotted] (0.000000,0.900000) rectangle (0.300000, 0.600000);
\draw[dashed, style=dotted] (0.000000,0.600000) rectangle (0.300000, 0.300000);
\draw[draw, fill=black!25] (0.000000,0.300000) rectangle (0.300000, 0.000000);
\draw[dashed, style=dotted] (0.300000,2.400000) rectangle (0.600000, 2.100000);
\draw[dashed, style=dotted] (0.300000,2.100000) rectangle (0.600000, 1.800000);
\draw[dashed, style=dotted] (0.300000,1.800000) rectangle (0.600000, 1.500000);
\draw[dashed, style=dotted] (0.300000,1.500000) rectangle (0.600000, 1.200000);
\draw[draw, fill=black!25] (0.300000,1.200000) rectangle (0.600000, 0.900000);
\draw[dashed, style=dotted] (0.300000,0.900000) rectangle (0.600000, 0.600000);
\draw[dashed, style=dotted] (0.300000,0.600000) rectangle (0.600000, 0.300000);
\draw[draw, fill=black!34] (0.300000,0.300000) rectangle (0.600000, 0.000000);
\draw[dashed, style=dotted] (0.600000,2.400000) rectangle (0.900000, 2.100000);
\draw[dashed, style=dotted] (0.600000,2.100000) rectangle (0.900000, 1.800000);
\draw[dashed, style=dotted] (0.600000,1.800000) rectangle (0.900000, 1.500000);
\draw[dashed, style=dotted] (0.600000,1.500000) rectangle (0.900000, 1.200000);
\draw[draw, fill=black!25] (0.600000,1.200000) rectangle (0.900000, 0.900000);
\draw[dashed, style=dotted] (0.600000,0.900000) rectangle (0.900000, 0.600000);
\draw[dashed, style=dotted] (0.600000,0.600000) rectangle (0.900000, 0.300000);
\draw[draw, fill=black!25] (0.600000,0.300000) rectangle (0.900000, 0.000000);
\draw[dashed, style=dotted] (0.900000,2.400000) rectangle (1.200000, 2.100000);
\draw[dashed, style=dotted] (0.900000,2.100000) rectangle (1.200000, 1.800000);
\draw[dashed, style=dotted] (0.900000,1.800000) rectangle (1.200000, 1.500000);
\draw[dashed, style=dotted] (0.900000,1.500000) rectangle (1.200000, 1.200000);
\draw[draw, fill=black!14] (0.900000,1.200000) rectangle (1.200000, 0.900000);
\draw[dashed, style=dotted] (0.900000,0.900000) rectangle (1.200000, 0.600000);
\draw[dashed, style=dotted] (0.900000,0.600000) rectangle (1.200000, 0.300000);
\draw[draw, fill=black!19] (0.900000,0.300000) rectangle (1.200000, 0.000000);
\draw[dashed, style=dotted] (1.200000,2.400000) rectangle (1.500000, 2.100000);
\draw[dashed, style=dotted] (1.200000,2.100000) rectangle (1.500000, 1.800000);
\draw[dashed, style=dotted] (1.200000,1.800000) rectangle (1.500000, 1.500000);
\draw[dashed, style=dotted] (1.200000,1.500000) rectangle (1.500000, 1.200000);
\draw[dashed, style=dotted] (1.200000,1.200000) rectangle (1.500000, 0.900000);
\draw[dashed, style=dotted] (1.200000,0.900000) rectangle (1.500000, 0.600000);
\draw[dashed, style=dotted] (1.200000,0.600000) rectangle (1.500000, 0.300000);
\draw[dashed, style=dotted] (1.200000,0.300000) rectangle (1.500000, 0.000000);
\draw[dashed, style=dotted] (1.500000,2.400000) rectangle (1.800000, 2.100000);
\draw[dashed, style=dotted] (1.500000,2.100000) rectangle (1.800000, 1.800000);
\draw[dashed, style=dotted] (1.500000,1.800000) rectangle (1.800000, 1.500000);
\draw[dashed, style=dotted] (1.500000,1.500000) rectangle (1.800000, 1.200000);
\draw[draw, fill=black!11] (1.500000,1.200000) rectangle (1.800000, 0.900000);
\draw[dashed, style=dotted] (1.500000,0.900000) rectangle (1.800000, 0.600000);
\draw[dashed, style=dotted] (1.500000,0.600000) rectangle (1.800000, 0.300000);
\draw[draw, fill=black!11] (1.500000,0.300000) rectangle (1.800000, 0.000000);
\draw[dashed, style=dotted] (1.800000,2.400000) rectangle (2.100000, 2.100000);
\draw[dashed, style=dotted] (1.800000,2.100000) rectangle (2.100000, 1.800000);
\draw[dashed, style=dotted] (1.800000,1.800000) rectangle (2.100000, 1.500000);
\draw[dashed, style=dotted] (1.800000,1.500000) rectangle (2.100000, 1.200000);
\draw[draw, fill=black!19] (1.800000,1.200000) rectangle (2.100000, 0.900000);
\draw[dashed, style=dotted] (1.800000,0.900000) rectangle (2.100000, 0.600000);
\draw[dashed, style=dotted] (1.800000,0.600000) rectangle (2.100000, 0.300000);
\draw[draw, fill=black!14] (1.800000,0.300000) rectangle (2.100000, 0.000000);
\draw[dashed, style=dotted] (2.100000,2.400000) rectangle (2.400000, 2.100000);
\draw[dashed, style=dotted] (2.100000,2.100000) rectangle (2.400000, 1.800000);
\draw[dashed, style=dotted] (2.100000,1.800000) rectangle (2.400000, 1.500000);
\draw[dashed, style=dotted] (2.100000,1.500000) rectangle (2.400000, 1.200000);
\draw[dashed, style=dotted] (2.100000,1.200000) rectangle (2.400000, 0.900000);
\draw[dashed, style=dotted] (2.100000,0.900000) rectangle (2.400000, 0.600000);
\draw[dashed, style=dotted] (2.100000,0.600000) rectangle (2.400000, 0.300000);
\draw[dashed, style=dotted] (2.100000,0.300000) rectangle (2.400000, 0.000000);
\draw[draw] (0.000000,0.000000) rectangle (2.400000, 2.400000);
\draw[below] (1.200000,0) node {$Q^{(1,2)}$};
    \end{scope}
    \begin{scope}[yshift = -1.6cm, xshift = 3.3cm]
      \draw[dashed, style=dotted] (0.000000,2.400000) rectangle (0.300000, 2.100000);
\draw[dashed, style=dotted] (0.000000,2.100000) rectangle (0.300000, 1.800000);
\draw[dashed, style=dotted] (0.000000,1.800000) rectangle (0.300000, 1.500000);
\draw[dashed, style=dotted] (0.000000,1.500000) rectangle (0.300000, 1.200000);
\draw[dashed, style=dotted] (0.000000,1.200000) rectangle (0.300000, 0.900000);
\draw[dashed, style=dotted] (0.000000,0.900000) rectangle (0.300000, 0.600000);
\draw[dashed, style=dotted] (0.000000,0.600000) rectangle (0.300000, 0.300000);
\draw[dashed, style=dotted] (0.000000,0.300000) rectangle (0.300000, 0.000000);
\draw[dashed, style=dotted] (0.300000,2.400000) rectangle (0.600000, 2.100000);
\draw[dashed, style=dotted] (0.300000,2.100000) rectangle (0.600000, 1.800000);
\draw[dashed, style=dotted] (0.300000,1.800000) rectangle (0.600000, 1.500000);
\draw[dashed, style=dotted] (0.300000,1.500000) rectangle (0.600000, 1.200000);
\draw[dashed, style=dotted] (0.300000,1.200000) rectangle (0.600000, 0.900000);
\draw[dashed, style=dotted] (0.300000,0.900000) rectangle (0.600000, 0.600000);
\draw[dashed, style=dotted] (0.300000,0.600000) rectangle (0.600000, 0.300000);
\draw[dashed, style=dotted] (0.300000,0.300000) rectangle (0.600000, 0.000000);
\draw[dashed, style=dotted] (0.600000,2.400000) rectangle (0.900000, 2.100000);
\draw[dashed, style=dotted] (0.600000,2.100000) rectangle (0.900000, 1.800000);
\draw[dashed, style=dotted] (0.600000,1.800000) rectangle (0.900000, 1.500000);
\draw[dashed, style=dotted] (0.600000,1.500000) rectangle (0.900000, 1.200000);
\draw[dashed, style=dotted] (0.600000,1.200000) rectangle (0.900000, 0.900000);
\draw[dashed, style=dotted] (0.600000,0.900000) rectangle (0.900000, 0.600000);
\draw[dashed, style=dotted] (0.600000,0.600000) rectangle (0.900000, 0.300000);
\draw[dashed, style=dotted] (0.600000,0.300000) rectangle (0.900000, 0.000000);
\draw[dashed, style=dotted] (0.900000,2.400000) rectangle (1.200000, 2.100000);
\draw[dashed, style=dotted] (0.900000,2.100000) rectangle (1.200000, 1.800000);
\draw[dashed, style=dotted] (0.900000,1.800000) rectangle (1.200000, 1.500000);
\draw[dashed, style=dotted] (0.900000,1.500000) rectangle (1.200000, 1.200000);
\draw[dashed, style=dotted] (0.900000,1.200000) rectangle (1.200000, 0.900000);
\draw[dashed, style=dotted] (0.900000,0.900000) rectangle (1.200000, 0.600000);
\draw[dashed, style=dotted] (0.900000,0.600000) rectangle (1.200000, 0.300000);
\draw[dashed, style=dotted] (0.900000,0.300000) rectangle (1.200000, 0.000000);
\draw[draw, fill=black!34] (1.200000,2.400000) rectangle (1.500000, 2.100000);
\draw[draw, fill=black!25] (1.200000,2.100000) rectangle (1.500000, 1.800000);
\draw[draw, fill=black!25] (1.200000,1.800000) rectangle (1.500000, 1.500000);
\draw[draw, fill=black!14] (1.200000,1.500000) rectangle (1.500000, 1.200000);
\draw[dashed, style=dotted] (1.200000,1.200000) rectangle (1.500000, 0.900000);
\draw[draw, fill=black!11] (1.200000,0.900000) rectangle (1.500000, 0.600000);
\draw[draw, fill=black!19] (1.200000,0.600000) rectangle (1.500000, 0.300000);
\draw[dashed, style=dotted] (1.200000,0.300000) rectangle (1.500000, 0.000000);
\draw[dashed, style=dotted] (1.500000,2.400000) rectangle (1.800000, 2.100000);
\draw[dashed, style=dotted] (1.500000,2.100000) rectangle (1.800000, 1.800000);
\draw[dashed, style=dotted] (1.500000,1.800000) rectangle (1.800000, 1.500000);
\draw[dashed, style=dotted] (1.500000,1.500000) rectangle (1.800000, 1.200000);
\draw[dashed, style=dotted] (1.500000,1.200000) rectangle (1.800000, 0.900000);
\draw[dashed, style=dotted] (1.500000,0.900000) rectangle (1.800000, 0.600000);
\draw[dashed, style=dotted] (1.500000,0.600000) rectangle (1.800000, 0.300000);
\draw[dashed, style=dotted] (1.500000,0.300000) rectangle (1.800000, 0.000000);
\draw[dashed, style=dotted] (1.800000,2.400000) rectangle (2.100000, 2.100000);
\draw[dashed, style=dotted] (1.800000,2.100000) rectangle (2.100000, 1.800000);
\draw[dashed, style=dotted] (1.800000,1.800000) rectangle (2.100000, 1.500000);
\draw[dashed, style=dotted] (1.800000,1.500000) rectangle (2.100000, 1.200000);
\draw[dashed, style=dotted] (1.800000,1.200000) rectangle (2.100000, 0.900000);
\draw[dashed, style=dotted] (1.800000,0.900000) rectangle (2.100000, 0.600000);
\draw[dashed, style=dotted] (1.800000,0.600000) rectangle (2.100000, 0.300000);
\draw[dashed, style=dotted] (1.800000,0.300000) rectangle (2.100000, 0.000000);
\draw[draw, fill=black!25] (2.100000,2.400000) rectangle (2.400000, 2.100000);
\draw[draw, fill=black!34] (2.100000,2.100000) rectangle (2.400000, 1.800000);
\draw[draw, fill=black!25] (2.100000,1.800000) rectangle (2.400000, 1.500000);
\draw[draw, fill=black!19] (2.100000,1.500000) rectangle (2.400000, 1.200000);
\draw[dashed, style=dotted] (2.100000,1.200000) rectangle (2.400000, 0.900000);
\draw[draw, fill=black!11] (2.100000,0.900000) rectangle (2.400000, 0.600000);
\draw[draw, fill=black!14] (2.100000,0.600000) rectangle (2.400000, 0.300000);
\draw[dashed, style=dotted] (2.100000,0.300000) rectangle (2.400000, 0.000000);
\draw[draw] (0.000000,0.000000) rectangle (2.400000, 2.400000);
\draw[below] (1.200000,0) node {$Q^{(2,1)}$};
    \end{scope}
    \begin{scope}[yshift = -1.6cm, xshift = 6.1cm]
      \draw[dashed, style=dotted] (0.000000,2.400000) rectangle (0.300000, 2.100000);
\draw[dashed, style=dotted] (0.000000,2.100000) rectangle (0.300000, 1.800000);
\draw[dashed, style=dotted] (0.000000,1.800000) rectangle (0.300000, 1.500000);
\draw[dashed, style=dotted] (0.000000,1.500000) rectangle (0.300000, 1.200000);
\draw[dashed, style=dotted] (0.000000,1.200000) rectangle (0.300000, 0.900000);
\draw[dashed, style=dotted] (0.000000,0.900000) rectangle (0.300000, 0.600000);
\draw[dashed, style=dotted] (0.000000,0.600000) rectangle (0.300000, 0.300000);
\draw[dashed, style=dotted] (0.000000,0.300000) rectangle (0.300000, 0.000000);
\draw[dashed, style=dotted] (0.300000,2.400000) rectangle (0.600000, 2.100000);
\draw[dashed, style=dotted] (0.300000,2.100000) rectangle (0.600000, 1.800000);
\draw[dashed, style=dotted] (0.300000,1.800000) rectangle (0.600000, 1.500000);
\draw[dashed, style=dotted] (0.300000,1.500000) rectangle (0.600000, 1.200000);
\draw[dashed, style=dotted] (0.300000,1.200000) rectangle (0.600000, 0.900000);
\draw[dashed, style=dotted] (0.300000,0.900000) rectangle (0.600000, 0.600000);
\draw[dashed, style=dotted] (0.300000,0.600000) rectangle (0.600000, 0.300000);
\draw[dashed, style=dotted] (0.300000,0.300000) rectangle (0.600000, 0.000000);
\draw[dashed, style=dotted] (0.600000,2.400000) rectangle (0.900000, 2.100000);
\draw[dashed, style=dotted] (0.600000,2.100000) rectangle (0.900000, 1.800000);
\draw[dashed, style=dotted] (0.600000,1.800000) rectangle (0.900000, 1.500000);
\draw[dashed, style=dotted] (0.600000,1.500000) rectangle (0.900000, 1.200000);
\draw[dashed, style=dotted] (0.600000,1.200000) rectangle (0.900000, 0.900000);
\draw[dashed, style=dotted] (0.600000,0.900000) rectangle (0.900000, 0.600000);
\draw[dashed, style=dotted] (0.600000,0.600000) rectangle (0.900000, 0.300000);
\draw[dashed, style=dotted] (0.600000,0.300000) rectangle (0.900000, 0.000000);
\draw[dashed, style=dotted] (0.900000,2.400000) rectangle (1.200000, 2.100000);
\draw[dashed, style=dotted] (0.900000,2.100000) rectangle (1.200000, 1.800000);
\draw[dashed, style=dotted] (0.900000,1.800000) rectangle (1.200000, 1.500000);
\draw[dashed, style=dotted] (0.900000,1.500000) rectangle (1.200000, 1.200000);
\draw[dashed, style=dotted] (0.900000,1.200000) rectangle (1.200000, 0.900000);
\draw[dashed, style=dotted] (0.900000,0.900000) rectangle (1.200000, 0.600000);
\draw[dashed, style=dotted] (0.900000,0.600000) rectangle (1.200000, 0.300000);
\draw[dashed, style=dotted] (0.900000,0.300000) rectangle (1.200000, 0.000000);
\draw[dashed, style=dotted] (1.200000,2.400000) rectangle (1.500000, 2.100000);
\draw[dashed, style=dotted] (1.200000,2.100000) rectangle (1.500000, 1.800000);
\draw[dashed, style=dotted] (1.200000,1.800000) rectangle (1.500000, 1.500000);
\draw[dashed, style=dotted] (1.200000,1.500000) rectangle (1.500000, 1.200000);
\draw[draw, fill=black!14] (1.200000,1.200000) rectangle (1.500000, 0.900000);
\draw[dashed, style=dotted] (1.200000,0.900000) rectangle (1.500000, 0.600000);
\draw[dashed, style=dotted] (1.200000,0.600000) rectangle (1.500000, 0.300000);
\draw[draw, fill=black!19] (1.200000,0.300000) rectangle (1.500000, 0.000000);
\draw[dashed, style=dotted] (1.500000,2.400000) rectangle (1.800000, 2.100000);
\draw[dashed, style=dotted] (1.500000,2.100000) rectangle (1.800000, 1.800000);
\draw[dashed, style=dotted] (1.500000,1.800000) rectangle (1.800000, 1.500000);
\draw[dashed, style=dotted] (1.500000,1.500000) rectangle (1.800000, 1.200000);
\draw[dashed, style=dotted] (1.500000,1.200000) rectangle (1.800000, 0.900000);
\draw[dashed, style=dotted] (1.500000,0.900000) rectangle (1.800000, 0.600000);
\draw[dashed, style=dotted] (1.500000,0.600000) rectangle (1.800000, 0.300000);
\draw[dashed, style=dotted] (1.500000,0.300000) rectangle (1.800000, 0.000000);
\draw[dashed, style=dotted] (1.800000,2.400000) rectangle (2.100000, 2.100000);
\draw[dashed, style=dotted] (1.800000,2.100000) rectangle (2.100000, 1.800000);
\draw[dashed, style=dotted] (1.800000,1.800000) rectangle (2.100000, 1.500000);
\draw[dashed, style=dotted] (1.800000,1.500000) rectangle (2.100000, 1.200000);
\draw[dashed, style=dotted] (1.800000,1.200000) rectangle (2.100000, 0.900000);
\draw[dashed, style=dotted] (1.800000,0.900000) rectangle (2.100000, 0.600000);
\draw[dashed, style=dotted] (1.800000,0.600000) rectangle (2.100000, 0.300000);
\draw[dashed, style=dotted] (1.800000,0.300000) rectangle (2.100000, 0.000000);
\draw[dashed, style=dotted] (2.100000,2.400000) rectangle (2.400000, 2.100000);
\draw[dashed, style=dotted] (2.100000,2.100000) rectangle (2.400000, 1.800000);
\draw[dashed, style=dotted] (2.100000,1.800000) rectangle (2.400000, 1.500000);
\draw[dashed, style=dotted] (2.100000,1.500000) rectangle (2.400000, 1.200000);
\draw[draw, fill=black!19] (2.100000,1.200000) rectangle (2.400000, 0.900000);
\draw[dashed, style=dotted] (2.100000,0.900000) rectangle (2.400000, 0.600000);
\draw[dashed, style=dotted] (2.100000,0.600000) rectangle (2.400000, 0.300000);
\draw[draw, fill=black!14] (2.100000,0.300000) rectangle (2.400000, 0.000000);
\draw[draw] (0.000000,0.000000) rectangle (2.400000, 2.400000);
\draw[below] (1.200000,0) node {$Q^{(2,2)}$};
    \end{scope}

  \end{tikzpicture}
  \caption{Partition the MAGM edge probability matrix $Q$ into $B^2$
    pieces such that no two nodes in a piece share the same attribute
    configuration. }
  \label{fig:quilt}
\end{figure}

While a number of partitioning schemes can be used, we find the
following scheme to be easy to analyze and efficient in practice: For
each $i$ define the set $Z_{i} := \cbr{ j \text{ s.t. } j \leq i \text{
    and } \lambda_i = \lambda_j }$. Clearly $\abr{Z_{i}}$ counts nodes
$j$ whose index is smaller than or equal to $i$ and which share the same
attribute configuration $\lambda_i$. We now define the $c$-th set
$D_{c}$ in our partition as $D_{c} := \cbr{i \text{ s.t. } \abr{Z_{i}} =
  c}$. No two nodes in $D_{c}$ share the same attribute
configuration. Furthermore, one can show that the number of sets $B$ is
minimized by this scheme.
\begin{theorem}[Size Optimality of the Partition]
  \label{thm:optimal_partition}
  Let the partition of $\cbr{1, \ldots, n}$ obtained by the above scheme
  be denoted as $D_1, D_2, \ldots, D_B$. Then, $B$, the number of
  nonempty sets in the partition, is minimized.
\end{theorem}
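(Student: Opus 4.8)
The plan is to prove optimality in two halves: first that the proposed scheme is \emph{feasible} --- the blocks $D_1, \ldots, D_B$ really do partition $\cbr{1, \ldots, n}$ with no block containing two nodes of equal attribute configuration --- and then that no feasible partition can use fewer blocks, so that $B$ is indeed the minimum.

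For feasibility, I would first note that $\abr{Z_i} \ge 1$ for every $i$ because $i \in Z_i$, and that each $i$ belongs to exactly one block $D_c$, namely the one with $c = \abr{Z_i}$; hence the $D_c$ form a genuine partition. To rule out two distinct nodes $i < i'$ with $\lambda_i = \lambda_{i'}$ sharing a block, I would use the monotonicity $Z_i \cup \cbr{i'} \subseteq Z_{i'}$: any $j \le i$ with $\lambda_j = \lambda_i$ also satisfies $j \le i'$ and $\lambda_j = \lambda_{i'}$, while $i' \in Z_{i'} \setminus Z_i$. This yields $\abr{Z_{i'}} \ge \abr{Z_i} + 1$, so $i$ and $i'$ necessarily land in different blocks.

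Next I would pin down $B$ exactly. Writing $M := \max_{0 \le v < n} \abr{\cbr{i : \lambda_i = v}}$ for the largest number of nodes carrying a common configuration, I claim $B = M$. The point is that $\abr{Z_i}$ is precisely the rank of $i$ among the nodes sharing its configuration when they are ordered by index; as $i$ ranges over all nodes, these ranks realize exactly the set $\cbr{1, 2, \ldots, M}$, since a configuration of maximal multiplicity contributes one node of each rank from $1$ through $M$, and no rank can exceed $M$. Thus precisely $D_1, \ldots, D_M$ are nonempty and $B = M$.

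For the matching lower bound, fix a value $v^{\star}$ attaining the maximum, so that $M$ distinct nodes share configuration $v^{\star}$. In any feasible partition these $M$ nodes must occupy $M$ distinct blocks, forcing at least $M$ nonempty blocks; combined with $B = M$ this establishes minimality. Conceptually the whole statement is the observation that the same-configuration relation splits the nodes into cliques and that feasible partitions are exactly the proper colorings of this cluster graph, whose chromatic number equals the maximum clique size $M$. I do not anticipate a genuine obstacle; the only step demanding care is the identification $B = M$ --- specifically verifying that every rank from $1$ to $M$ is actually attained, so that no block $D_c$ with $c \le M$ is accidentally empty.
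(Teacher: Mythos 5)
Your proposal is correct and follows essentially the same route as the paper: the heart of both arguments is the pigeonhole bound obtained from a configuration of maximal multiplicity $M$, combined with the observation that the scheme uses exactly $M$ nonempty sets. You simply make explicit two steps the paper asserts without detail --- feasibility via the monotonicity $\abr{Z_{i'}} \ge \abr{Z_i}+1$ for same-configuration nodes $i < i'$, and the identification $B = M$ through the rank interpretation of $\abr{Z_i}$ --- which is a welcome tightening but not a different proof.
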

\begin{proof}
  See Appendix~\ref{sec:TechnicalProofs}. 
\end{proof}

Using the partition $D_1, \ldots, D_B$, we can partition the edge
probability matrix $Q$ into $B^2$ sub-matrices (Figure~\ref{fig:quilt}):
\begin{align*}
  Q^{(k,l)}_{i,j} =
  \begin{cases}
    Q_{i,j} & \text{ if } i \in D_k, j \in D_l, \\
    0 & \text{ otherwise}.
  \end{cases}
\end{align*}

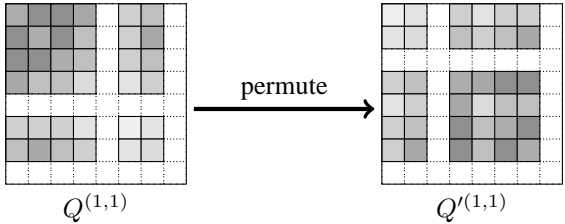
\begin{figure}[htbp]
  \centering
  \begin{tikzpicture}[scale=1.0]
    \begin{scope}[yshift = 0cm, xshift = 0cm]
      \draw[draw, fill=black!32] (0.000000,2.400000) rectangle (0.300000, 2.100000);
\draw[draw, fill=black!44] (0.000000,2.100000) rectangle (0.300000, 1.800000);
\draw[draw, fill=black!44] (0.000000,1.800000) rectangle (0.300000, 1.500000);
\draw[draw, fill=black!34] (0.000000,1.500000) rectangle (0.300000, 1.200000);
\draw[dashed, style=dotted] (0.000000,1.200000) rectangle (0.300000, 0.900000);
\draw[draw, fill=black!19] (0.000000,0.900000) rectangle (0.300000, 0.600000);
\draw[draw, fill=black!25] (0.000000,0.600000) rectangle (0.300000, 0.300000);
\draw[dashed, style=dotted] (0.000000,0.300000) rectangle (0.300000, 0.000000);
\draw[draw, fill=black!44] (0.300000,2.400000) rectangle (0.600000, 2.100000);
\draw[draw, fill=black!32] (0.300000,2.100000) rectangle (0.600000, 1.800000);
\draw[draw, fill=black!44] (0.300000,1.800000) rectangle (0.600000, 1.500000);
\draw[draw, fill=black!25] (0.300000,1.500000) rectangle (0.600000, 1.200000);
\draw[dashed, style=dotted] (0.300000,1.200000) rectangle (0.600000, 0.900000);
\draw[draw, fill=black!19] (0.300000,0.900000) rectangle (0.600000, 0.600000);
\draw[draw, fill=black!34] (0.300000,0.600000) rectangle (0.600000, 0.300000);
\draw[dashed, style=dotted] (0.300000,0.300000) rectangle (0.600000, 0.000000);
\draw[draw, fill=black!44] (0.600000,2.400000) rectangle (0.900000, 2.100000);
\draw[draw, fill=black!44] (0.600000,2.100000) rectangle (0.900000, 1.800000);
\draw[draw, fill=black!32] (0.600000,1.800000) rectangle (0.900000, 1.500000);
\draw[draw, fill=black!25] (0.600000,1.500000) rectangle (0.900000, 1.200000);
\draw[dashed, style=dotted] (0.600000,1.200000) rectangle (0.900000, 0.900000);
\draw[draw, fill=black!19] (0.600000,0.900000) rectangle (0.900000, 0.600000);
\draw[draw, fill=black!25] (0.600000,0.600000) rectangle (0.900000, 0.300000);
\draw[dashed, style=dotted] (0.600000,0.300000) rectangle (0.900000, 0.000000);
\draw[draw, fill=black!34] (0.900000,2.400000) rectangle (1.200000, 2.100000);
\draw[draw, fill=black!25] (0.900000,2.100000) rectangle (1.200000, 1.800000);
\draw[draw, fill=black!25] (0.900000,1.800000) rectangle (1.200000, 1.500000);
\draw[draw, fill=black!14] (0.900000,1.500000) rectangle (1.200000, 1.200000);
\draw[dashed, style=dotted] (0.900000,1.200000) rectangle (1.200000, 0.900000);
\draw[draw, fill=black!11] (0.900000,0.900000) rectangle (1.200000, 0.600000);
\draw[draw, fill=black!19] (0.900000,0.600000) rectangle (1.200000, 0.300000);
\draw[dashed, style=dotted] (0.900000,0.300000) rectangle (1.200000, 0.000000);
\draw[dashed, style=dotted] (1.200000,2.400000) rectangle (1.500000, 2.100000);
\draw[dashed, style=dotted] (1.200000,2.100000) rectangle (1.500000, 1.800000);
\draw[dashed, style=dotted] (1.200000,1.800000) rectangle (1.500000, 1.500000);
\draw[dashed, style=dotted] (1.200000,1.500000) rectangle (1.500000, 1.200000);
\draw[dashed, style=dotted] (1.200000,1.200000) rectangle (1.500000, 0.900000);
\draw[dashed, style=dotted] (1.200000,0.900000) rectangle (1.500000, 0.600000);
\draw[dashed, style=dotted] (1.200000,0.600000) rectangle (1.500000, 0.300000);
\draw[dashed, style=dotted] (1.200000,0.300000) rectangle (1.500000, 0.000000);
\draw[draw, fill=black!19] (1.500000,2.400000) rectangle (1.800000, 2.100000);
\draw[draw, fill=black!19] (1.500000,2.100000) rectangle (1.800000, 1.800000);
\draw[draw, fill=black!19] (1.500000,1.800000) rectangle (1.800000, 1.500000);
\draw[draw, fill=black!11] (1.500000,1.500000) rectangle (1.800000, 1.200000);
\draw[dashed, style=dotted] (1.500000,1.200000) rectangle (1.800000, 0.900000);
\draw[draw, fill=black!6] (1.500000,0.900000) rectangle (1.800000, 0.600000);
\draw[draw, fill=black!11] (1.500000,0.600000) rectangle (1.800000, 0.300000);
\draw[dashed, style=dotted] (1.500000,0.300000) rectangle (1.800000, 0.000000);
\draw[draw, fill=black!25] (1.800000,2.400000) rectangle (2.100000, 2.100000);
\draw[draw, fill=black!34] (1.800000,2.100000) rectangle (2.100000, 1.800000);
\draw[draw, fill=black!25] (1.800000,1.800000) rectangle (2.100000, 1.500000);
\draw[draw, fill=black!19] (1.800000,1.500000) rectangle (2.100000, 1.200000);
\draw[dashed, style=dotted] (1.800000,1.200000) rectangle (2.100000, 0.900000);
\draw[draw, fill=black!11] (1.800000,0.900000) rectangle (2.100000, 0.600000);
\draw[draw, fill=black!14] (1.800000,0.600000) rectangle (2.100000, 0.300000);
\draw[dashed, style=dotted] (1.800000,0.300000) rectangle (2.100000, 0.000000);
\draw[dashed, style=dotted] (2.100000,2.400000) rectangle (2.400000, 2.100000);
\draw[dashed, style=dotted] (2.100000,2.100000) rectangle (2.400000, 1.800000);
\draw[dashed, style=dotted] (2.100000,1.800000) rectangle (2.400000, 1.500000);
\draw[dashed, style=dotted] (2.100000,1.500000) rectangle (2.400000, 1.200000);
\draw[dashed, style=dotted] (2.100000,1.200000) rectangle (2.400000, 0.900000);
\draw[dashed, style=dotted] (2.100000,0.900000) rectangle (2.400000, 0.600000);
\draw[dashed, style=dotted] (2.100000,0.600000) rectangle (2.400000, 0.300000);
\draw[dashed, style=dotted] (2.100000,0.300000) rectangle (2.400000, 0.000000);
\draw[draw] (0.000000,0.000000) rectangle (2.400000, 2.400000);
\draw[below] (1.200000,0) node {$Q^{(1,1)}$};
    \end{scope}
    \begin{scope}[yshift = 0cm, xshift = 5.0cm]
      \draw[draw, fill=black!6] (0.000000,2.400000) rectangle (0.300000, 2.100000);
  \draw[draw, fill=black!11] (0.000000,2.100000) rectangle (0.300000, 1.800000);
  \draw[dashed, style=dotted] (0.000000,1.800000) rectangle (0.300000, 1.500000);
\draw[draw, fill=black!19] (0.000000,1.500000) rectangle (0.300000, 1.200000);
  \draw[draw, fill=black!11] (0.000000,1.200000) rectangle (0.300000, 0.900000);
  \draw[draw, fill=black!19] (0.000000,0.900000) rectangle (0.300000, 0.600000);
  \draw[draw, fill=black!19] (0.000000,0.600000) rectangle (0.300000, 0.300000);
  \draw[dashed, style=dotted] (0.000000,0.300000) rectangle (0.300000, 0.000000);
\draw[draw, fill=black!11] (0.300000,2.400000) rectangle (0.600000, 2.100000);
  \draw[draw, fill=black!14] (0.300000,2.100000) rectangle (0.600000, 1.800000);
  \draw[dashed, style=dotted] (0.300000,1.800000) rectangle (0.600000, 1.500000);
\draw[draw, fill=black!25] (0.300000,1.500000) rectangle (0.600000, 1.200000);
  \draw[draw, fill=black!19] (0.300000,1.200000) rectangle (0.600000, 0.900000);
  \draw[draw, fill=black!25] (0.300000,0.900000) rectangle (0.600000, 0.600000);
  \draw[draw, fill=black!34] (0.300000,0.600000) rectangle (0.600000, 0.300000);
  \draw[dashed, style=dotted] (0.300000,0.300000) rectangle (0.600000, 0.000000);
\draw[dashed, style=dotted] (0.600000,2.400000) rectangle (0.900000, 2.100000);
\draw[dashed, style=dotted] (0.600000,2.100000) rectangle (0.900000, 1.800000);
\draw[dashed, style=dotted] (0.600000,1.800000) rectangle (0.900000, 1.500000);
\draw[dashed, style=dotted] (0.600000,1.500000) rectangle (0.900000, 1.200000);
\draw[dashed, style=dotted] (0.600000,1.200000) rectangle (0.900000, 0.900000);
\draw[dashed, style=dotted] (0.600000,0.900000) rectangle (0.900000, 0.600000);
\draw[dashed, style=dotted] (0.600000,0.600000) rectangle (0.900000, 0.300000);
\draw[dashed, style=dotted] (0.600000,0.300000) rectangle (0.900000, 0.000000);
\draw[draw, fill=black!19] (0.900000,2.400000) rectangle (1.200000, 2.100000);
  \draw[draw, fill=black!25] (0.900000,2.100000) rectangle (1.200000, 1.800000);
  \draw[dashed, style=dotted] (0.900000,1.800000) rectangle (1.200000, 1.500000);
\draw[draw, fill=black!32] (0.900000,1.500000) rectangle (1.200000, 1.200000);
  \draw[draw, fill=black!34] (0.900000,1.200000) rectangle (1.200000, 0.900000);
  \draw[draw, fill=black!44] (0.900000,0.900000) rectangle (1.200000, 0.600000);
  \draw[draw, fill=black!44] (0.900000,0.600000) rectangle (1.200000, 0.300000);
  \draw[dashed, style=dotted] (0.900000,0.300000) rectangle (1.200000, 0.000000);
\draw[draw, fill=black!11] (1.200000,2.400000) rectangle (1.500000, 2.100000);
  \draw[draw, fill=black!19] (1.200000,2.100000) rectangle (1.500000, 1.800000);
  \draw[dashed, style=dotted] (1.200000,1.800000) rectangle (1.500000, 1.500000);
\draw[draw, fill=black!34] (1.200000,1.500000) rectangle (1.500000, 1.200000);
  \draw[draw, fill=black!14] (1.200000,1.200000) rectangle (1.500000, 0.900000);
  \draw[draw, fill=black!25] (1.200000,0.900000) rectangle (1.500000, 0.600000);
  \draw[draw, fill=black!25] (1.200000,0.600000) rectangle (1.500000, 0.300000);
  \draw[dashed, style=dotted] (1.200000,0.300000) rectangle (1.500000, 0.000000);
\draw[draw, fill=black!19] (1.500000,2.400000) rectangle (1.800000, 2.100000);
  \draw[draw, fill=black!25] (1.500000,2.100000) rectangle (1.800000, 1.800000);
  \draw[dashed, style=dotted] (1.500000,1.800000) rectangle (1.800000, 1.500000);
\draw[draw, fill=black!44] (1.500000,1.500000) rectangle (1.800000, 1.200000);
  \draw[draw, fill=black!25] (1.500000,1.200000) rectangle (1.800000, 0.900000);
  \draw[draw, fill=black!32] (1.500000,0.900000) rectangle (1.800000, 0.600000);
  \draw[draw, fill=black!44] (1.500000,0.600000) rectangle (1.800000, 0.300000);
  \draw[dashed, style=dotted] (1.500000,0.300000) rectangle (1.800000, 0.000000);
\draw[draw, fill=black!19] (1.800000,2.400000) rectangle (2.100000, 2.100000);
  \draw[draw, fill=black!34] (1.800000,2.100000) rectangle (2.100000, 1.800000);
  \draw[dashed, style=dotted] (1.800000,1.800000) rectangle (2.100000, 1.500000);
\draw[draw, fill=black!44] (1.800000,1.500000) rectangle (2.100000, 1.200000);
  \draw[draw, fill=black!25] (1.800000,1.200000) rectangle (2.100000, 0.900000);
  \draw[draw, fill=black!44] (1.800000,0.900000) rectangle (2.100000, 0.600000);
  \draw[draw, fill=black!32] (1.800000,0.600000) rectangle (2.100000, 0.300000);
  \draw[dashed, style=dotted] (1.800000,0.300000) rectangle (2.100000, 0.000000);
\draw[dashed, style=dotted] (2.100000,2.400000) rectangle (2.400000, 2.100000);
\draw[dashed, style=dotted] (2.100000,2.100000) rectangle (2.400000, 1.800000);
\draw[dashed, style=dotted] (2.100000,1.800000) rectangle (2.400000, 1.500000);
\draw[dashed, style=dotted] (2.100000,1.500000) rectangle (2.400000, 1.200000);
\draw[dashed, style=dotted] (2.100000,1.200000) rectangle (2.400000, 0.900000);
\draw[dashed, style=dotted] (2.100000,0.900000) rectangle (2.400000, 0.600000);
\draw[dashed, style=dotted] (2.100000,0.600000) rectangle (2.400000, 0.300000);
\draw[dashed, style=dotted] (2.100000,0.300000) rectangle (2.400000, 0.000000);
\draw[draw] (0.000000,0.000000) rectangle (2.400000, 2.400000);
\draw[below] (1.200000,0) node {$Q'^{(1,1)}$};
    \end{scope}
    \draw (2.5, 1.0) edge[->, ultra thick] (4.9, 1.0);
    \draw[above] (3.7, 1.0) node{{permute}};
  \end{tikzpicture}
  \caption{Each piece of the edge probability matrix is permuted to
    become a sub-matrix of the KPGM edge probability matrix. One can
    then apply Algorithm \ref{alg:kpgmsample} to sample graphs from this
    permuted edge probability matrix and retain the sub-graph of
    interest.  }
  \label{fig:quilt2}
\end{figure}

Next, by applying a permutation which maps $\lambda_i$ to $i$ we can
transform each of the $B^2$ sub-matrices of $Q$ into a submatrix of $P$
as illustrated in Figure~\ref{fig:quilt2}. Formally, define
\begin{align*}
  Q'^{(k,l)}_{i,j} =
  \begin{cases}
    Q_{x,y} & \text{ if } x \in D_k, y \in D_l, i=\lambda_x,
    j=\lambda_y \\
    0 & \text{ otherwise}.
  \end{cases}
\end{align*}

Algorithm~\ref{alg:kpgmsample} can be used to sample graphs from this
permuted edge probability matrix with parameters $\Thetat$. We filter
the sampled graph to only retain the sub-graph of interest.  Finally,
the sampled sub-graphs are un-permuted and quilted together to form a
sample from the MAGM (see Figure~\ref{fig:quilt3}). Let $A'^{(k,l)}$
denote the adjacency matrix of the graph sampled from the edge
probability matrix $Q'^{(k,l)}$ via Algorithm
\ref{alg:kpgmsample}. Define
\begin{align}
  A^{(k,l)}_{i,j} =
  \begin{cases}
    A'^{(k,l)}_{x,y} & \text{ if } i \in D_k, j \in D_l,
    x=\lambda_i,
    y=\lambda_j\\
    0 & \text{ otherwise}.
  \end{cases}
  \label{eq:unpermute}
\end{align}
The quilted adjacency matrix $A$ is given by $\sum_{k, l} A^{(k, l)}$.
See Algorithm~\ref{alg:magsample}.

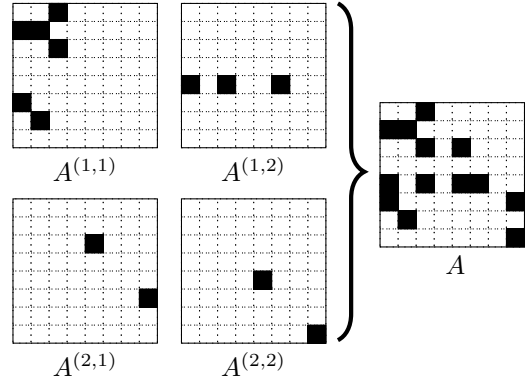
\begin{figure}[h]
  \centering
  \begin{tikzpicture}[scale=0.8]


    \begin{scope}[yshift = 1.65cm, xshift = 3.3cm]
      \draw[dashed, style=dotted] (0.000000,2.400000) rectangle (0.300000, 2.100000);
\draw[dashed, style=dotted] (0.000000,2.100000) rectangle (0.300000, 1.800000);
\draw[dashed, style=dotted] (0.000000,1.800000) rectangle (0.300000, 1.500000);
\draw[dashed, style=dotted] (0.000000,1.500000) rectangle (0.300000, 1.200000);
\draw[dashed, style=dotted] (0.000000,1.200000) rectangle (0.300000, 0.900000);
\draw[dashed, style=dotted] (0.000000,0.900000) rectangle (0.300000, 0.600000);
\draw[dashed, style=dotted] (0.000000,0.600000) rectangle (0.300000, 0.300000);
\draw[dashed, style=dotted] (0.000000,0.300000) rectangle (0.300000, 0.000000);
\draw[dashed, style=dotted] (0.300000,2.400000) rectangle (0.600000, 2.100000);
\draw[dashed, style=dotted] (0.300000,2.100000) rectangle (0.600000, 1.800000);
\draw[dashed, style=dotted] (0.300000,1.800000) rectangle (0.600000, 1.500000);
\draw[dashed, style=dotted] (0.300000,1.500000) rectangle (0.600000, 1.200000);
\draw[dashed, style=dotted] (0.300000,1.200000) rectangle (0.600000, 0.900000);
\draw[dashed, style=dotted] (0.300000,0.900000) rectangle (0.600000, 0.600000);
\draw[dashed, style=dotted] (0.300000,0.600000) rectangle (0.600000, 0.300000);
\draw[dashed, style=dotted] (0.300000,0.300000) rectangle (0.600000, 0.000000);
\draw[dashed, style=dotted] (0.600000,2.400000) rectangle (0.900000, 2.100000);
\draw[dashed, style=dotted] (0.600000,2.100000) rectangle (0.900000, 1.800000);
\draw[dashed, style=dotted] (0.600000,1.800000) rectangle (0.900000, 1.500000);
\draw[dashed, style=dotted] (0.600000,1.500000) rectangle (0.900000, 1.200000);
\draw[dashed, style=dotted] (0.600000,1.200000) rectangle (0.900000, 0.900000);
\draw[dashed, style=dotted] (0.600000,0.900000) rectangle (0.900000, 0.600000);
\draw[dashed, style=dotted] (0.600000,0.600000) rectangle (0.900000, 0.300000);
\draw[dashed, style=dotted] (0.600000,0.300000) rectangle (0.900000, 0.000000);
\draw[dashed, style=dotted] (0.900000,2.400000) rectangle (1.200000, 2.100000);
\draw[dashed, style=dotted] (0.900000,2.100000) rectangle (1.200000, 1.800000);
\draw[dashed, style=dotted] (0.900000,1.800000) rectangle (1.200000, 1.500000);
\draw[dashed, style=dotted] (0.900000,1.500000) rectangle (1.200000, 1.200000);
\draw[dashed, style=dotted] (0.900000,1.200000) rectangle (1.200000, 0.900000);
\draw[dashed, style=dotted] (0.900000,0.900000) rectangle (1.200000, 0.600000);
\draw[dashed, style=dotted] (0.900000,0.600000) rectangle (1.200000, 0.300000);
\draw[dashed, style=dotted] (0.900000,0.300000) rectangle (1.200000, 0.000000);
\draw[dashed, style=dotted] (1.200000,2.400000) rectangle (1.500000, 2.100000);
\draw[dashed, style=dotted] (1.200000,2.100000) rectangle (1.500000, 1.800000);
\draw[dashed, style=dotted] (1.200000,1.800000) rectangle (1.500000, 1.500000);
\draw[dashed, style=dotted] (1.200000,1.500000) rectangle (1.500000, 1.200000);
\draw[dashed, style=dotted] (1.200000,1.200000) rectangle (1.500000, 0.900000);
\draw[dashed, style=dotted] (1.200000,0.900000) rectangle (1.500000, 0.600000);
\draw[dashed, style=dotted] (1.200000,0.600000) rectangle (1.500000, 0.300000);
\draw[dashed, style=dotted] (1.200000,0.300000) rectangle (1.500000, 0.000000);
\draw[dashed, style=dotted] (1.500000,2.400000) rectangle (1.800000, 2.100000);
\draw[dashed, style=dotted] (1.500000,2.100000) rectangle (1.800000, 1.800000);
\draw[dashed, style=dotted] (1.500000,1.800000) rectangle (1.800000, 1.500000);
\draw[dashed, style=dotted] (1.500000,1.500000) rectangle (1.800000, 1.200000);
\draw[dashed, style=dotted] (1.500000,1.200000) rectangle (1.800000, 0.900000);
\draw[dashed, style=dotted] (1.500000,0.900000) rectangle (1.800000, 0.600000);
\draw[dashed, style=dotted] (1.500000,0.600000) rectangle (1.800000, 0.300000);
\draw[dashed, style=dotted] (1.500000,0.300000) rectangle (1.800000, 0.000000);
\draw[dashed, style=dotted] (1.800000,2.400000) rectangle (2.100000, 2.100000);
\draw[dashed, style=dotted] (1.800000,2.100000) rectangle (2.100000, 1.800000);
\draw[dashed, style=dotted] (1.800000,1.800000) rectangle (2.100000, 1.500000);
\draw[dashed, style=dotted] (1.800000,1.500000) rectangle (2.100000, 1.200000);
\draw[dashed, style=dotted] (1.800000,1.200000) rectangle (2.100000, 0.900000);
\draw[dashed, style=dotted] (1.800000,0.900000) rectangle (2.100000, 0.600000);
\draw[dashed, style=dotted] (1.800000,0.600000) rectangle (2.100000, 0.300000);
\draw[dashed, style=dotted] (1.800000,0.300000) rectangle (2.100000, 0.000000);
\draw[dashed, style=dotted] (2.100000,2.400000) rectangle (2.400000, 2.100000);
\draw[dashed, style=dotted] (2.100000,2.100000) rectangle (2.400000, 1.800000);
\draw[dashed, style=dotted] (2.100000,1.800000) rectangle (2.400000, 1.500000);
\draw[dashed, style=dotted] (2.100000,1.500000) rectangle (2.400000, 1.200000);
\draw[dashed, style=dotted] (2.100000,1.200000) rectangle (2.400000, 0.900000);
\draw[dashed, style=dotted] (2.100000,0.900000) rectangle (2.400000, 0.600000);
\draw[dashed, style=dotted] (2.100000,0.600000) rectangle (2.400000, 0.300000);
\draw[dashed, style=dotted] (2.100000,0.300000) rectangle (2.400000, 0.000000);
\draw[draw, fill=black!100] (0.000000,0.900000) rectangle (0.300000, 0.600000);
  \draw[draw, fill=black!100] (0.000000,2.100000) rectangle (0.300000, 1.800000);
  \draw[draw, fill=black!100] (0.600000,2.400000) rectangle (0.900000, 2.100000);
  \draw[draw, fill=black!100] (0.600000,1.800000) rectangle (0.900000, 1.500000);
  \draw[draw, fill=black!100] (0.300000,0.600000) rectangle (0.600000, 0.300000);
  \draw[draw, fill=black!100] (0.300000,2.100000) rectangle (0.600000, 1.800000);
  \draw[draw] (0.000000,0.000000) rectangle (2.400000, 2.400000);
\draw[below] (1.200000,0) node {$A^{(1,1)}$};
    \end{scope}
    \begin{scope}[yshift = 1.65cm, xshift = 6.1cm]
      \draw[dashed, style=dotted] (0.000000,2.400000) rectangle (0.300000, 2.100000);
\draw[dashed, style=dotted] (0.000000,2.100000) rectangle (0.300000, 1.800000);
\draw[dashed, style=dotted] (0.000000,1.800000) rectangle (0.300000, 1.500000);
\draw[dashed, style=dotted] (0.000000,1.500000) rectangle (0.300000, 1.200000);
\draw[dashed, style=dotted] (0.000000,1.200000) rectangle (0.300000, 0.900000);
\draw[dashed, style=dotted] (0.000000,0.900000) rectangle (0.300000, 0.600000);
\draw[dashed, style=dotted] (0.000000,0.600000) rectangle (0.300000, 0.300000);
\draw[dashed, style=dotted] (0.000000,0.300000) rectangle (0.300000, 0.000000);
\draw[dashed, style=dotted] (0.300000,2.400000) rectangle (0.600000, 2.100000);
\draw[dashed, style=dotted] (0.300000,2.100000) rectangle (0.600000, 1.800000);
\draw[dashed, style=dotted] (0.300000,1.800000) rectangle (0.600000, 1.500000);
\draw[dashed, style=dotted] (0.300000,1.500000) rectangle (0.600000, 1.200000);
\draw[dashed, style=dotted] (0.300000,1.200000) rectangle (0.600000, 0.900000);
\draw[dashed, style=dotted] (0.300000,0.900000) rectangle (0.600000, 0.600000);
\draw[dashed, style=dotted] (0.300000,0.600000) rectangle (0.600000, 0.300000);
\draw[dashed, style=dotted] (0.300000,0.300000) rectangle (0.600000, 0.000000);
\draw[dashed, style=dotted] (0.600000,2.400000) rectangle (0.900000, 2.100000);
\draw[dashed, style=dotted] (0.600000,2.100000) rectangle (0.900000, 1.800000);
\draw[dashed, style=dotted] (0.600000,1.800000) rectangle (0.900000, 1.500000);
\draw[dashed, style=dotted] (0.600000,1.500000) rectangle (0.900000, 1.200000);
\draw[dashed, style=dotted] (0.600000,1.200000) rectangle (0.900000, 0.900000);
\draw[dashed, style=dotted] (0.600000,0.900000) rectangle (0.900000, 0.600000);
\draw[dashed, style=dotted] (0.600000,0.600000) rectangle (0.900000, 0.300000);
\draw[dashed, style=dotted] (0.600000,0.300000) rectangle (0.900000, 0.000000);
\draw[dashed, style=dotted] (0.900000,2.400000) rectangle (1.200000, 2.100000);
\draw[dashed, style=dotted] (0.900000,2.100000) rectangle (1.200000, 1.800000);
\draw[dashed, style=dotted] (0.900000,1.800000) rectangle (1.200000, 1.500000);
\draw[dashed, style=dotted] (0.900000,1.500000) rectangle (1.200000, 1.200000);
\draw[dashed, style=dotted] (0.900000,1.200000) rectangle (1.200000, 0.900000);
\draw[dashed, style=dotted] (0.900000,0.900000) rectangle (1.200000, 0.600000);
\draw[dashed, style=dotted] (0.900000,0.600000) rectangle (1.200000, 0.300000);
\draw[dashed, style=dotted] (0.900000,0.300000) rectangle (1.200000, 0.000000);
\draw[dashed, style=dotted] (1.200000,2.400000) rectangle (1.500000, 2.100000);
\draw[dashed, style=dotted] (1.200000,2.100000) rectangle (1.500000, 1.800000);
\draw[dashed, style=dotted] (1.200000,1.800000) rectangle (1.500000, 1.500000);
\draw[dashed, style=dotted] (1.200000,1.500000) rectangle (1.500000, 1.200000);
\draw[dashed, style=dotted] (1.200000,1.200000) rectangle (1.500000, 0.900000);
\draw[dashed, style=dotted] (1.200000,0.900000) rectangle (1.500000, 0.600000);
\draw[dashed, style=dotted] (1.200000,0.600000) rectangle (1.500000, 0.300000);
\draw[dashed, style=dotted] (1.200000,0.300000) rectangle (1.500000, 0.000000);
\draw[dashed, style=dotted] (1.500000,2.400000) rectangle (1.800000, 2.100000);
\draw[dashed, style=dotted] (1.500000,2.100000) rectangle (1.800000, 1.800000);
\draw[dashed, style=dotted] (1.500000,1.800000) rectangle (1.800000, 1.500000);
\draw[dashed, style=dotted] (1.500000,1.500000) rectangle (1.800000, 1.200000);
\draw[dashed, style=dotted] (1.500000,1.200000) rectangle (1.800000, 0.900000);
\draw[dashed, style=dotted] (1.500000,0.900000) rectangle (1.800000, 0.600000);
\draw[dashed, style=dotted] (1.500000,0.600000) rectangle (1.800000, 0.300000);
\draw[dashed, style=dotted] (1.500000,0.300000) rectangle (1.800000, 0.000000);
\draw[dashed, style=dotted] (1.800000,2.400000) rectangle (2.100000, 2.100000);
\draw[dashed, style=dotted] (1.800000,2.100000) rectangle (2.100000, 1.800000);
\draw[dashed, style=dotted] (1.800000,1.800000) rectangle (2.100000, 1.500000);
\draw[dashed, style=dotted] (1.800000,1.500000) rectangle (2.100000, 1.200000);
\draw[dashed, style=dotted] (1.800000,1.200000) rectangle (2.100000, 0.900000);
\draw[dashed, style=dotted] (1.800000,0.900000) rectangle (2.100000, 0.600000);
\draw[dashed, style=dotted] (1.800000,0.600000) rectangle (2.100000, 0.300000);
\draw[dashed, style=dotted] (1.800000,0.300000) rectangle (2.100000, 0.000000);
\draw[dashed, style=dotted] (2.100000,2.400000) rectangle (2.400000, 2.100000);
\draw[dashed, style=dotted] (2.100000,2.100000) rectangle (2.400000, 1.800000);
\draw[dashed, style=dotted] (2.100000,1.800000) rectangle (2.400000, 1.500000);
\draw[dashed, style=dotted] (2.100000,1.500000) rectangle (2.400000, 1.200000);
\draw[dashed, style=dotted] (2.100000,1.200000) rectangle (2.400000, 0.900000);
\draw[dashed, style=dotted] (2.100000,0.900000) rectangle (2.400000, 0.600000);
\draw[dashed, style=dotted] (2.100000,0.600000) rectangle (2.400000, 0.300000);
\draw[dashed, style=dotted] (2.100000,0.300000) rectangle (2.400000, 0.000000);
\draw[draw, fill=black!100] (1.500000,1.200000) rectangle (1.800000, 0.900000);
  \draw[draw, fill=black!100] (0.000000,1.200000) rectangle (0.300000, 0.900000);
  \draw[draw, fill=black!100] (0.600000,1.200000) rectangle (0.900000, 0.900000);
  \draw[draw] (0.000000,0.000000) rectangle (2.400000, 2.400000);
\draw[below] (1.200000,0) node {$A^{(1,2)}$};
    \end{scope}
    \begin{scope}[yshift = -1.6cm, xshift = 3.3cm]
      \draw[dashed, style=dotted] (0.000000,2.400000) rectangle (0.300000, 2.100000);
\draw[dashed, style=dotted] (0.000000,2.100000) rectangle (0.300000, 1.800000);
\draw[dashed, style=dotted] (0.000000,1.800000) rectangle (0.300000, 1.500000);
\draw[dashed, style=dotted] (0.000000,1.500000) rectangle (0.300000, 1.200000);
\draw[dashed, style=dotted] (0.000000,1.200000) rectangle (0.300000, 0.900000);
\draw[dashed, style=dotted] (0.000000,0.900000) rectangle (0.300000, 0.600000);
\draw[dashed, style=dotted] (0.000000,0.600000) rectangle (0.300000, 0.300000);
\draw[dashed, style=dotted] (0.000000,0.300000) rectangle (0.300000, 0.000000);
\draw[dashed, style=dotted] (0.300000,2.400000) rectangle (0.600000, 2.100000);
\draw[dashed, style=dotted] (0.300000,2.100000) rectangle (0.600000, 1.800000);
\draw[dashed, style=dotted] (0.300000,1.800000) rectangle (0.600000, 1.500000);
\draw[dashed, style=dotted] (0.300000,1.500000) rectangle (0.600000, 1.200000);
\draw[dashed, style=dotted] (0.300000,1.200000) rectangle (0.600000, 0.900000);
\draw[dashed, style=dotted] (0.300000,0.900000) rectangle (0.600000, 0.600000);
\draw[dashed, style=dotted] (0.300000,0.600000) rectangle (0.600000, 0.300000);
\draw[dashed, style=dotted] (0.300000,0.300000) rectangle (0.600000, 0.000000);
\draw[dashed, style=dotted] (0.600000,2.400000) rectangle (0.900000, 2.100000);
\draw[dashed, style=dotted] (0.600000,2.100000) rectangle (0.900000, 1.800000);
\draw[dashed, style=dotted] (0.600000,1.800000) rectangle (0.900000, 1.500000);
\draw[dashed, style=dotted] (0.600000,1.500000) rectangle (0.900000, 1.200000);
\draw[dashed, style=dotted] (0.600000,1.200000) rectangle (0.900000, 0.900000);
\draw[dashed, style=dotted] (0.600000,0.900000) rectangle (0.900000, 0.600000);
\draw[dashed, style=dotted] (0.600000,0.600000) rectangle (0.900000, 0.300000);
\draw[dashed, style=dotted] (0.600000,0.300000) rectangle (0.900000, 0.000000);
\draw[dashed, style=dotted] (0.900000,2.400000) rectangle (1.200000, 2.100000);
\draw[dashed, style=dotted] (0.900000,2.100000) rectangle (1.200000, 1.800000);
\draw[dashed, style=dotted] (0.900000,1.800000) rectangle (1.200000, 1.500000);
\draw[dashed, style=dotted] (0.900000,1.500000) rectangle (1.200000, 1.200000);
\draw[dashed, style=dotted] (0.900000,1.200000) rectangle (1.200000, 0.900000);
\draw[dashed, style=dotted] (0.900000,0.900000) rectangle (1.200000, 0.600000);
\draw[dashed, style=dotted] (0.900000,0.600000) rectangle (1.200000, 0.300000);
\draw[dashed, style=dotted] (0.900000,0.300000) rectangle (1.200000, 0.000000);
\draw[dashed, style=dotted] (1.200000,2.400000) rectangle (1.500000, 2.100000);
\draw[dashed, style=dotted] (1.200000,2.100000) rectangle (1.500000, 1.800000);
\draw[dashed, style=dotted] (1.200000,1.800000) rectangle (1.500000, 1.500000);
\draw[dashed, style=dotted] (1.200000,1.500000) rectangle (1.500000, 1.200000);
\draw[dashed, style=dotted] (1.200000,1.200000) rectangle (1.500000, 0.900000);
\draw[dashed, style=dotted] (1.200000,0.900000) rectangle (1.500000, 0.600000);
\draw[dashed, style=dotted] (1.200000,0.600000) rectangle (1.500000, 0.300000);
\draw[dashed, style=dotted] (1.200000,0.300000) rectangle (1.500000, 0.000000);
\draw[dashed, style=dotted] (1.500000,2.400000) rectangle (1.800000, 2.100000);
\draw[dashed, style=dotted] (1.500000,2.100000) rectangle (1.800000, 1.800000);
\draw[dashed, style=dotted] (1.500000,1.800000) rectangle (1.800000, 1.500000);
\draw[dashed, style=dotted] (1.500000,1.500000) rectangle (1.800000, 1.200000);
\draw[dashed, style=dotted] (1.500000,1.200000) rectangle (1.800000, 0.900000);
\draw[dashed, style=dotted] (1.500000,0.900000) rectangle (1.800000, 0.600000);
\draw[dashed, style=dotted] (1.500000,0.600000) rectangle (1.800000, 0.300000);
\draw[dashed, style=dotted] (1.500000,0.300000) rectangle (1.800000, 0.000000);
\draw[dashed, style=dotted] (1.800000,2.400000) rectangle (2.100000, 2.100000);
\draw[dashed, style=dotted] (1.800000,2.100000) rectangle (2.100000, 1.800000);
\draw[dashed, style=dotted] (1.800000,1.800000) rectangle (2.100000, 1.500000);
\draw[dashed, style=dotted] (1.800000,1.500000) rectangle (2.100000, 1.200000);
\draw[dashed, style=dotted] (1.800000,1.200000) rectangle (2.100000, 0.900000);
\draw[dashed, style=dotted] (1.800000,0.900000) rectangle (2.100000, 0.600000);
\draw[dashed, style=dotted] (1.800000,0.600000) rectangle (2.100000, 0.300000);
\draw[dashed, style=dotted] (1.800000,0.300000) rectangle (2.100000, 0.000000);
\draw[dashed, style=dotted] (2.100000,2.400000) rectangle (2.400000, 2.100000);
\draw[dashed, style=dotted] (2.100000,2.100000) rectangle (2.400000, 1.800000);
\draw[dashed, style=dotted] (2.100000,1.800000) rectangle (2.400000, 1.500000);
\draw[dashed, style=dotted] (2.100000,1.500000) rectangle (2.400000, 1.200000);
\draw[dashed, style=dotted] (2.100000,1.200000) rectangle (2.400000, 0.900000);
\draw[dashed, style=dotted] (2.100000,0.900000) rectangle (2.400000, 0.600000);
\draw[dashed, style=dotted] (2.100000,0.600000) rectangle (2.400000, 0.300000);
\draw[dashed, style=dotted] (2.100000,0.300000) rectangle (2.400000, 0.000000);
\draw[draw, fill=black!100] (2.100000,0.900000) rectangle (2.400000, 0.600000);
  \draw[draw, fill=black!100] (1.200000,1.800000) rectangle (1.500000, 1.500000);
  \draw[draw] (0.000000,0.000000) rectangle (2.400000, 2.400000);
\draw[below] (1.200000,0) node {$A^{(2,1)}$};
    \end{scope}
    \begin{scope}[yshift = -1.6cm, xshift = 6.1cm]
      \draw[dashed, style=dotted] (0.000000,2.400000) rectangle (0.300000, 2.100000);
\draw[dashed, style=dotted] (0.000000,2.100000) rectangle (0.300000, 1.800000);
\draw[dashed, style=dotted] (0.000000,1.800000) rectangle (0.300000, 1.500000);
\draw[dashed, style=dotted] (0.000000,1.500000) rectangle (0.300000, 1.200000);
\draw[dashed, style=dotted] (0.000000,1.200000) rectangle (0.300000, 0.900000);
\draw[dashed, style=dotted] (0.000000,0.900000) rectangle (0.300000, 0.600000);
\draw[dashed, style=dotted] (0.000000,0.600000) rectangle (0.300000, 0.300000);
\draw[dashed, style=dotted] (0.000000,0.300000) rectangle (0.300000, 0.000000);
\draw[dashed, style=dotted] (0.300000,2.400000) rectangle (0.600000, 2.100000);
\draw[dashed, style=dotted] (0.300000,2.100000) rectangle (0.600000, 1.800000);
\draw[dashed, style=dotted] (0.300000,1.800000) rectangle (0.600000, 1.500000);
\draw[dashed, style=dotted] (0.300000,1.500000) rectangle (0.600000, 1.200000);
\draw[dashed, style=dotted] (0.300000,1.200000) rectangle (0.600000, 0.900000);
\draw[dashed, style=dotted] (0.300000,0.900000) rectangle (0.600000, 0.600000);
\draw[dashed, style=dotted] (0.300000,0.600000) rectangle (0.600000, 0.300000);
\draw[dashed, style=dotted] (0.300000,0.300000) rectangle (0.600000, 0.000000);
\draw[dashed, style=dotted] (0.600000,2.400000) rectangle (0.900000, 2.100000);
\draw[dashed, style=dotted] (0.600000,2.100000) rectangle (0.900000, 1.800000);
\draw[dashed, style=dotted] (0.600000,1.800000) rectangle (0.900000, 1.500000);
\draw[dashed, style=dotted] (0.600000,1.500000) rectangle (0.900000, 1.200000);
\draw[dashed, style=dotted] (0.600000,1.200000) rectangle (0.900000, 0.900000);
\draw[dashed, style=dotted] (0.600000,0.900000) rectangle (0.900000, 0.600000);
\draw[dashed, style=dotted] (0.600000,0.600000) rectangle (0.900000, 0.300000);
\draw[dashed, style=dotted] (0.600000,0.300000) rectangle (0.900000, 0.000000);
\draw[dashed, style=dotted] (0.900000,2.400000) rectangle (1.200000, 2.100000);
\draw[dashed, style=dotted] (0.900000,2.100000) rectangle (1.200000, 1.800000);
\draw[dashed, style=dotted] (0.900000,1.800000) rectangle (1.200000, 1.500000);
\draw[dashed, style=dotted] (0.900000,1.500000) rectangle (1.200000, 1.200000);
\draw[dashed, style=dotted] (0.900000,1.200000) rectangle (1.200000, 0.900000);
\draw[dashed, style=dotted] (0.900000,0.900000) rectangle (1.200000, 0.600000);
\draw[dashed, style=dotted] (0.900000,0.600000) rectangle (1.200000, 0.300000);
\draw[dashed, style=dotted] (0.900000,0.300000) rectangle (1.200000, 0.000000);
\draw[dashed, style=dotted] (1.200000,2.400000) rectangle (1.500000, 2.100000);
\draw[dashed, style=dotted] (1.200000,2.100000) rectangle (1.500000, 1.800000);
\draw[dashed, style=dotted] (1.200000,1.800000) rectangle (1.500000, 1.500000);
\draw[dashed, style=dotted] (1.200000,1.500000) rectangle (1.500000, 1.200000);
\draw[dashed, style=dotted] (1.200000,1.200000) rectangle (1.500000, 0.900000);
\draw[dashed, style=dotted] (1.200000,0.900000) rectangle (1.500000, 0.600000);
\draw[dashed, style=dotted] (1.200000,0.600000) rectangle (1.500000, 0.300000);
\draw[dashed, style=dotted] (1.200000,0.300000) rectangle (1.500000, 0.000000);
\draw[dashed, style=dotted] (1.500000,2.400000) rectangle (1.800000, 2.100000);
\draw[dashed, style=dotted] (1.500000,2.100000) rectangle (1.800000, 1.800000);
\draw[dashed, style=dotted] (1.500000,1.800000) rectangle (1.800000, 1.500000);
\draw[dashed, style=dotted] (1.500000,1.500000) rectangle (1.800000, 1.200000);
\draw[dashed, style=dotted] (1.500000,1.200000) rectangle (1.800000, 0.900000);
\draw[dashed, style=dotted] (1.500000,0.900000) rectangle (1.800000, 0.600000);
\draw[dashed, style=dotted] (1.500000,0.600000) rectangle (1.800000, 0.300000);
\draw[dashed, style=dotted] (1.500000,0.300000) rectangle (1.800000, 0.000000);
\draw[dashed, style=dotted] (1.800000,2.400000) rectangle (2.100000, 2.100000);
\draw[dashed, style=dotted] (1.800000,2.100000) rectangle (2.100000, 1.800000);
\draw[dashed, style=dotted] (1.800000,1.800000) rectangle (2.100000, 1.500000);
\draw[dashed, style=dotted] (1.800000,1.500000) rectangle (2.100000, 1.200000);
\draw[dashed, style=dotted] (1.800000,1.200000) rectangle (2.100000, 0.900000);
\draw[dashed, style=dotted] (1.800000,0.900000) rectangle (2.100000, 0.600000);
\draw[dashed, style=dotted] (1.800000,0.600000) rectangle (2.100000, 0.300000);
\draw[dashed, style=dotted] (1.800000,0.300000) rectangle (2.100000, 0.000000);
\draw[dashed, style=dotted] (2.100000,2.400000) rectangle (2.400000, 2.100000);
\draw[dashed, style=dotted] (2.100000,2.100000) rectangle (2.400000, 1.800000);
\draw[dashed, style=dotted] (2.100000,1.800000) rectangle (2.400000, 1.500000);
\draw[dashed, style=dotted] (2.100000,1.500000) rectangle (2.400000, 1.200000);
\draw[dashed, style=dotted] (2.100000,1.200000) rectangle (2.400000, 0.900000);
\draw[dashed, style=dotted] (2.100000,0.900000) rectangle (2.400000, 0.600000);
\draw[dashed, style=dotted] (2.100000,0.600000) rectangle (2.400000, 0.300000);
\draw[dashed, style=dotted] (2.100000,0.300000) rectangle (2.400000, 0.000000);
\draw[draw, fill=black!100] (2.100000,0.300000) rectangle (2.400000, 0.000000);
  \draw[draw, fill=black!100] (1.200000,1.200000) rectangle (1.500000, 0.900000);
  \draw[draw] (0.000000,0.000000) rectangle (2.400000, 2.400000);
\draw[below] (1.200000,0) node {$A^{(2,2)}$};
    \end{scope}
    \draw [decoration={brace, amplitude=10pt}, decorate,
    ultra thick, black] (8.7,4.05) -- (8.7,-1.55);

    \begin{scope}[yshift = 0cm, xshift = 9.4cm]
      \draw[draw] (0.000000,0.000000) rectangle (2.400000, 2.400000);
\draw[dashed, style=dotted] (0.000000,2.400000) rectangle (0.300000, 2.100000);
\draw[dashed, style=dotted] (0.000000,2.100000) rectangle (0.300000, 1.800000);
\draw[dashed, style=dotted] (0.000000,1.800000) rectangle (0.300000, 1.500000);
\draw[dashed, style=dotted] (0.000000,1.500000) rectangle (0.300000, 1.200000);
\draw[dashed, style=dotted] (0.000000,1.200000) rectangle (0.300000, 0.900000);
\draw[dashed, style=dotted] (0.000000,0.900000) rectangle (0.300000, 0.600000);
\draw[dashed, style=dotted] (0.000000,0.600000) rectangle (0.300000, 0.300000);
\draw[dashed, style=dotted] (0.000000,0.300000) rectangle (0.300000, 0.000000);
\draw[dashed, style=dotted] (0.300000,2.400000) rectangle (0.600000, 2.100000);
\draw[dashed, style=dotted] (0.300000,2.100000) rectangle (0.600000, 1.800000);
\draw[dashed, style=dotted] (0.300000,1.800000) rectangle (0.600000, 1.500000);
\draw[dashed, style=dotted] (0.300000,1.500000) rectangle (0.600000, 1.200000);
\draw[dashed, style=dotted] (0.300000,1.200000) rectangle (0.600000, 0.900000);
\draw[dashed, style=dotted] (0.300000,0.900000) rectangle (0.600000, 0.600000);
\draw[dashed, style=dotted] (0.300000,0.600000) rectangle (0.600000, 0.300000);
\draw[dashed, style=dotted] (0.300000,0.300000) rectangle (0.600000, 0.000000);
\draw[dashed, style=dotted] (0.600000,2.400000) rectangle (0.900000, 2.100000);
\draw[dashed, style=dotted] (0.600000,2.100000) rectangle (0.900000, 1.800000);
\draw[dashed, style=dotted] (0.600000,1.800000) rectangle (0.900000, 1.500000);
\draw[dashed, style=dotted] (0.600000,1.500000) rectangle (0.900000, 1.200000);
\draw[dashed, style=dotted] (0.600000,1.200000) rectangle (0.900000, 0.900000);
\draw[dashed, style=dotted] (0.600000,0.900000) rectangle (0.900000, 0.600000);
\draw[dashed, style=dotted] (0.600000,0.600000) rectangle (0.900000, 0.300000);
\draw[dashed, style=dotted] (0.600000,0.300000) rectangle (0.900000, 0.000000);
\draw[dashed, style=dotted] (0.900000,2.400000) rectangle (1.200000, 2.100000);
\draw[dashed, style=dotted] (0.900000,2.100000) rectangle (1.200000, 1.800000);
\draw[dashed, style=dotted] (0.900000,1.800000) rectangle (1.200000, 1.500000);
\draw[dashed, style=dotted] (0.900000,1.500000) rectangle (1.200000, 1.200000);
\draw[dashed, style=dotted] (0.900000,1.200000) rectangle (1.200000, 0.900000);
\draw[dashed, style=dotted] (0.900000,0.900000) rectangle (1.200000, 0.600000);
\draw[dashed, style=dotted] (0.900000,0.600000) rectangle (1.200000, 0.300000);
\draw[dashed, style=dotted] (0.900000,0.300000) rectangle (1.200000, 0.000000);
\draw[dashed, style=dotted] (1.200000,2.400000) rectangle (1.500000, 2.100000);
\draw[dashed, style=dotted] (1.200000,2.100000) rectangle (1.500000, 1.800000);
\draw[dashed, style=dotted] (1.200000,1.800000) rectangle (1.500000, 1.500000);
\draw[dashed, style=dotted] (1.200000,1.500000) rectangle (1.500000, 1.200000);
\draw[dashed, style=dotted] (1.200000,1.200000) rectangle (1.500000, 0.900000);
\draw[dashed, style=dotted] (1.200000,0.900000) rectangle (1.500000, 0.600000);
\draw[dashed, style=dotted] (1.200000,0.600000) rectangle (1.500000, 0.300000);
\draw[dashed, style=dotted] (1.200000,0.300000) rectangle (1.500000, 0.000000);
\draw[dashed, style=dotted] (1.500000,2.400000) rectangle (1.800000, 2.100000);
\draw[dashed, style=dotted] (1.500000,2.100000) rectangle (1.800000, 1.800000);
\draw[dashed, style=dotted] (1.500000,1.800000) rectangle (1.800000, 1.500000);
\draw[dashed, style=dotted] (1.500000,1.500000) rectangle (1.800000, 1.200000);
\draw[dashed, style=dotted] (1.500000,1.200000) rectangle (1.800000, 0.900000);
\draw[dashed, style=dotted] (1.500000,0.900000) rectangle (1.800000, 0.600000);
\draw[dashed, style=dotted] (1.500000,0.600000) rectangle (1.800000, 0.300000);
\draw[dashed, style=dotted] (1.500000,0.300000) rectangle (1.800000, 0.000000);
\draw[dashed, style=dotted] (1.800000,2.400000) rectangle (2.100000, 2.100000);
\draw[dashed, style=dotted] (1.800000,2.100000) rectangle (2.100000, 1.800000);
\draw[dashed, style=dotted] (1.800000,1.800000) rectangle (2.100000, 1.500000);
\draw[dashed, style=dotted] (1.800000,1.500000) rectangle (2.100000, 1.200000);
\draw[dashed, style=dotted] (1.800000,1.200000) rectangle (2.100000, 0.900000);
\draw[dashed, style=dotted] (1.800000,0.900000) rectangle (2.100000, 0.600000);
\draw[dashed, style=dotted] (1.800000,0.600000) rectangle (2.100000, 0.300000);
\draw[dashed, style=dotted] (1.800000,0.300000) rectangle (2.100000, 0.000000);
\draw[dashed, style=dotted] (2.100000,2.400000) rectangle (2.400000, 2.100000);
\draw[dashed, style=dotted] (2.100000,2.100000) rectangle (2.400000, 1.800000);
\draw[dashed, style=dotted] (2.100000,1.800000) rectangle (2.400000, 1.500000);
\draw[dashed, style=dotted] (2.100000,1.500000) rectangle (2.400000, 1.200000);
\draw[dashed, style=dotted] (2.100000,1.200000) rectangle (2.400000, 0.900000);
\draw[dashed, style=dotted] (2.100000,0.900000) rectangle (2.400000, 0.600000);
\draw[dashed, style=dotted] (2.100000,0.600000) rectangle (2.400000, 0.300000);
\draw[dashed, style=dotted] (2.100000,0.300000) rectangle (2.400000, 0.000000);
\draw[draw, fill=black!100] (0.000000,0.900000) rectangle (0.300000, 0.600000);
  \draw[draw, fill=black!100] (0.000000,2.100000) rectangle (0.300000, 1.800000);
  \draw[draw, fill=black!100] (0.600000,2.400000) rectangle (0.900000, 2.100000);
  \draw[draw, fill=black!100] (0.600000,1.800000) rectangle (0.900000, 1.500000);
  \draw[draw, fill=black!100] (0.300000,0.600000) rectangle (0.600000, 0.300000);
  \draw[draw, fill=black!100] (0.300000,2.100000) rectangle (0.600000, 1.800000);
  \draw[draw, fill=black!100] (1.500000,1.200000) rectangle (1.800000, 0.900000);
  \draw[draw, fill=black!100] (0.000000,1.200000) rectangle (0.300000, 0.900000);
  \draw[draw, fill=black!100] (0.600000,1.200000) rectangle (0.900000, 0.900000);
  \draw[draw, fill=black!100] (2.100000,0.900000) rectangle (2.400000, 0.600000);
  \draw[draw, fill=black!100] (1.200000,1.800000) rectangle (1.500000, 1.500000);
  \draw[draw, fill=black!100] (2.100000,0.300000) rectangle (2.400000, 0.000000);
  \draw[draw, fill=black!100] (1.200000,1.200000) rectangle (1.500000, 0.900000);
  
      \draw[below] (1.25, 0) node{{$A$}};
    \end{scope}
  \end{tikzpicture}
  \caption{ The sub-graphs sampled from the previous step are
    un-permuted and quilted together to form a graph sampled from the
    MAGM. }
  \label{fig:quilt3}
\end{figure}
\begin{algorithm}
  \caption{Sampling Algorithm of Multiplicative Attribute Graphs}\label{alg:magsample}
  \begin{algorithmic}[1]
    \Function {MAGSampleEdges}{
      $\Thetat, f(1), \ldots, f(n)$
    }
    \State $B \gets \max_i |Z_i|$
    \For {$k \gets 1$ to $B$}
    \For {$l \gets 1$ to $B$}
    \State $E^{(k,l)} \gets$ \Call{\;\;\;KPGMSample}{$\Thetat$}
    \label{alg:kpgmsample_call}
    \For {each $(x, y) \in E^{(k,l)}$}
    \If {$(i,j)$ such that
      $i \in D_k$, $j \in D_l$, and
      $x = \lambda_i$, $y = \lambda_j$
      exists}
    \State $E \gets E \cup \{ (i,j) \}$
    \EndIf
    \EndFor
    \EndFor
    \EndFor
    \State \textbf{return} $E$
    \EndFunction
  \end{algorithmic}
\end{algorithm}

\begin{theorem}[Correctness]
  \label{thm:correctness}
  Algorithm~\ref{alg:magsample} samples the entries of the adjacency
  matrix $A$ independently with
  $\PP\rbr{A_{ij} = 1 \mid \Thetat, \lambda_1, \ldots, \lambda_n} = Q_{ij}$.
\end{theorem}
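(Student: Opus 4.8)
The plan is to verify the two assertions of the statement separately: that each entry $A_{ij}$ has the correct Bernoulli marginal $Q_{ij}$, and that the whole collection $\{A_{ij}\}$ is mutually independent. Both follow once we trace every entry $A_{ij}$ back to a \emph{single} Bernoulli variable produced by one of the $B^2$ calls to \textsc{KPGMSample}. Throughout I would assume the correctness of Algorithm~\ref{alg:kpgmsample} as a black box, namely that ${A'}^{(k,l)}$ is the adjacency matrix of a genuine KPGM sample whose entries ${A'}^{(k,l)}_{x,y}$ are independent Bernoulli variables with parameter $P_{x,y}$.

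First I would record the structural reduction. Since $D_1, \ldots, D_B$ is a partition of $\{1,\ldots,n\}$, every pair $(i,j)$ lies in exactly one block $D_k \times D_l$, and for every other block $(k',l')$ the definition \eqref{eq:unpermute} forces $A^{(k',l')}_{ij} = 0$. Hence $A_{ij} = \sum_{k,l} A^{(k,l)}_{ij} = {A'}^{(k,l)}_{\lambda_i \lambda_j}$ for the unique block with $i \in D_k$, $j \in D_l$. As ${A'}^{(k,l)}_{\lambda_i \lambda_j}$ is Bernoulli with parameter $P_{\lambda_i \lambda_j}$, and $P_{\lambda_i \lambda_j} = Q_{ij}$ by \eqref{eq:kpgm_mag_con}, the marginal claim is immediate. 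The KPGM entries not of the form $(\lambda_i,\lambda_j)$ for nodes in the relevant blocks are simply discarded by the existence check in Algorithm~\ref{alg:magsample}, which affects neither the marginals nor the independence of the retained entries.

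The core of the argument is independence, and the crucial tool is the defining property of the partition: no two nodes in the same block share an attribute configuration. I would define the map $\phi(i,j) = (k,l,\lambda_i,\lambda_j)$, where $(k,l)$ is the block containing $(i,j)$, and show it is injective. Indeed, if $\phi(i,j) = \phi(i',j')$, then $i,i'$ lie in the same $D_k$ with $\lambda_i = \lambda_{i'}$, which forces $i=i'$ by the partition property (and symmetrically $j=j'$). Thus distinct pairs map to distinct coordinates $(k,l,x,y)$; in particular, two distinct pairs inside the same block index two \emph{distinct} entries of the same matrix ${A'}^{(k,l)}$.

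Finally I would assemble the conclusion. The full family $\{{A'}^{(k,l)}_{x,y}\}$ is mutually independent across all indices: across blocks because the $B^2$ invocations of \textsc{KPGMSample} use fresh, independent randomness, and within a fixed block because the KPGM samples its entries independently. Since $\{A_{ij}\}$ is, via the injective $\phi$, a sub-family of this independent collection indexed by pairwise-distinct coordinates, it is itself mutually independent; together with the marginal computation this proves the theorem. The one point demanding care — and the step I expect to be the main obstacle — is precisely the injectivity of $\phi$ on each block, since without the uniqueness property guaranteed by the construction underlying Theorem~\ref{thm:optimal_partition} two retained edges could collide on the same underlying Bernoulli trial and destroy independence.
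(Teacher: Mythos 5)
Your proposal is correct and takes essentially the same approach as the paper: you reduce each entry to $A_{ij} = {A'}^{(k,l)}_{\lambda_i \lambda_j}$ for the unique block containing $(i,j)$, read off the marginal from \eqref{eq:kpgm_mag_con}, and establish independence by injectivity of the map $(i,j) \mapsto (k,l,\lambda_i,\lambda_j)$, which is exactly the paper's observation that distinct pairs must differ either in $(\lambda_i,\lambda_j)$ or in the block indices $(|Z_i|,|Z_j|)$. The only cosmetic difference is that you derive injectivity from the partition's defining property (no two nodes in a block share an attribute configuration) stated contrapositively, whereas the paper argues directly from the counting definition of $Z_i$ — the same fact.
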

\begin{proof}
  See Appendix \ref{sec:TechnicalProofs}
\end{proof}

\subsection{Time Complexity}

Since the expected running time of Algorithm~\ref{alg:kpgmsample} is
$O(\log_{2}(n) |E|)$, the expected time complexity of quilting is
clearly $O(B^2 \log_{2}(n) |E|)$. The key technical
challenge in order to establish the efficiency of our scheme is to show
that with high probability $B$ is small, ideally $O(\log_{2}(n))$.

\paragraph{Balanced Attributes}
\label{sec:BalancedAttributes}
Suppose the distribution of each attribute is balanced, that is,
$\mu^{(1)} = \mu^{(2)} = \cdots = \mu^{(d)} = 0.5$ and $n = 2^d$. Define
a random variable $X_{c}^{i} = 1$ if $\lambda_i = c$ and zero
otherwise. Since $\mu^{(k)} = 0.5$, it follows that $\PP\rbr{X_{c}^{i} =
  1} = \frac{1}{2^{d}} = \frac{1}{n}$. If we let $Y_c = \sum_{i=1}^{n}
X_{c}^{i}$, then clearly $B = \max_c Y_c$.  Since $X_{c}^{i}$ are
independent Bernoulli random variables, $Y_c$ is a binomial random
variable which converges to a Poisson random variable with parameter $1$
as $n \to \infty$.  Using standard Chernoff bounds for the Poisson
distribution (see Theorem~\ref{thm:chernoff}), we can write
\begin{align}
  \label{eq:poisson-chernoff}
  \PP\rbr{Y_c > t} & \leq \frac{e^t}{e t^t}, \text{ and hence } \\
  \PP\rbr{ B = \max Y_c > t} & \leq \sum_{c=1}^n \PP[ Y_c > t] \leq \frac{n
    e^t} {e t^t}.
\end{align}
Replacing $t$ by $\log_{2}(n)$,
\begin{align}
  \label{eq:B_bound}
  \PP\rbr{B > \log_{2}(n)} \leq \frac{n^2} {e (\log_{2}(n))^{\log_{2}(n)}} .
\end{align}
As $n \rightarrow \infty$, \eqref{eq:B_bound} goes to 0 (also see
Figure~\ref{fig:binnum_05}). Therefore we have
\begin{theorem}
  \label{thm:scalability}
  When $\mu^{(1)} = \mu^{(2)} = \cdots = \mu^{(d)} = 0.5$, and $n =
  2^d$, with high probability the size of partitions $B$ is smaller than
  $\log_{2}(n)$.
\end{theorem}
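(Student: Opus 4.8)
The plan is to recognize Theorem~\ref{thm:scalability} as a maximum-load (balls-into-bins) statement and then finish with the tail bound already assembled in the preceding discussion. First I would observe that when every $\mu^{(k)} = 1/2$ the bits $f_k(i)$ are i.i.d.\ fair coin flips, so each attribute configuration $\lambda_i$ is uniformly distributed on $\{0, 1, \ldots, n-1\}$ and the $\lambda_i$ are mutually independent; this is exactly $n$ balls thrown uniformly and independently into $n$ bins. With $Y_c = \sum_{i=1}^n X_c^i$ counting the nodes whose configuration equals $c$, I would verify that the partition construction gives $B = \max_c Y_c$: for a fixed value $v$ the nodes with $\lambda_i = v$, taken in increasing index order, receive the labels $\abr{Z_i} = 1, 2, \ldots, Y_v$, so $D_c$ is nonempty precisely when some configuration is shared by at least $c$ nodes, i.e.\ when $c \le \max_v Y_v$. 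Thus $B$ is the maximum bin occupancy.

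Second, I would control the upper tail of a single $Y_c$ and then union-bound. Each $Y_c \sim \mathrm{Binomial}(n, 1/n)$ has mean $1$, so the Chernoff estimate of Theorem~\ref{thm:chernoff} gives $\PP(Y_c > t) \le e^t / (e\, t^t)$ as in \eqref{eq:poisson-chernoff}. Summing over the $n$ possible configurations yields $\PP(B > t) \le n e^t/(e\, t^t)$, and substituting $t = \log_2(n)$ produces the bound \eqref{eq:B_bound}, namely $\PP(B > \log_2(n)) \le n^2 / \!\big(e\,(\log_2 n)^{\log_2 n}\big)$.

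The remaining, and only genuinely quantitative, step is to show this bound tends to $0$, which upgrades it to the \emph{with high probability} conclusion $\PP(B < \log_2 n) \to 1$. Taking logarithms, the right-hand side behaves like $2\ln n - 1 - \log_2(n)\,\ln(\log_2 n)$; since $\log_2 n = \ln n/\ln 2$, the subtracted term grows on the order of $(\ln n)(\ln\ln n)/\ln 2$, which dominates the $2\ln n$ term, so the logarithm diverges to $-\infty$ and the probability vanishes. I expect the main obstacle to be nothing deeper than this asymptotic bookkeeping, with one point worth flagging: $Y_c$ is exactly binomial rather than Poisson, but at mean $1$ the binomial Chernoff bound $e^{-1}(e/t)^t$ coincides with the stated Poisson bound, so the passage to the Poisson limit is a convenience rather than a necessity and costs nothing in rigor.
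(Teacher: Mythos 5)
Your proposal is correct and follows essentially the same route as the paper: define $Y_c$ as the occupancy count of configuration $c$, note $B = \max_c Y_c$, apply the Poisson-style Chernoff bound of Theorem~\ref{thm:chernoff} with a union bound over the $n$ configurations, and substitute $t = \log_2(n)$ to obtain \eqref{eq:B_bound}. In fact you tighten two points the paper leaves implicit --- the exact-binomial justification replacing the Poisson limit, and the explicit logarithmic bookkeeping showing $2\ln n - 1 - \log_2(n)\ln(\log_2 n) \to -\infty$ --- so no further changes are needed.
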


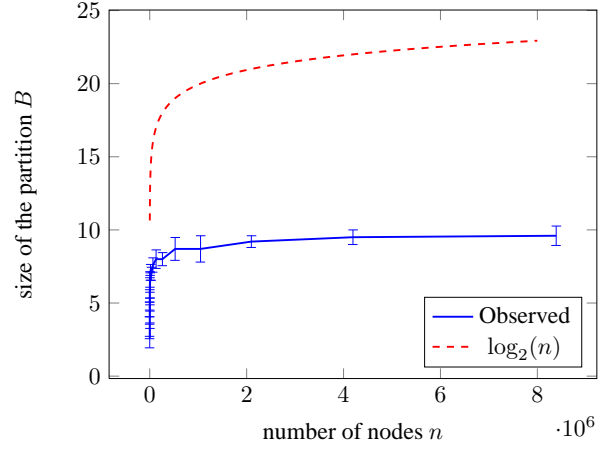
\begin{figure}
  \begin{tikzpicture}[scale=0.9]
    \begin{axis}[
      legend pos=south east,
      width=250pt, height=200pt, no markers,
      xlabel={number of nodes $n$},
      ylabel={size of the partition $B$},
      domain=0:8000000,
      samples=5000
      ]

      \addplot+[
      color=blue, mark=x, thick,
      error bars/.cd,
      y dir=both, y explicit,
      error mark=-]
      table[x=x,y=y,y error=errory]
      {partnums_0.50.dat};

      \addplot+[color=red, no markers, thick, dashed]
      {ln(x)/ln(2)};

      \legend{Observed, $\log_{2}(n)$}
    \end{axis}
  \end{tikzpicture}
  \caption{ Number of nodes vs.\ size of the partition, when
    $\mu^{(k)}$'s are all set to be 0.5. For each $n$, we performed 10
    trials and report average values (blue solid line). The red dashed
    line is the bound predicted by \eqref{eq:B_bound}. Observe that in
    practice, the size of the partition grows much slower than
    $O(\log_{2}(n))$. }
  \label{fig:binnum_05}
\end{figure}

\paragraph{Unbalanced Attributes}
\label{sec:UnbalancedAttributes}
As before, we let $\mu^{(1)} = \mu^{(2)} = \cdots = \mu^{(d)} = \mu$ and
$n = 2^d$, but now we analyze the case when $\mu \neq 0.5$. By
transposing some of $\Theta^{(k)}$ if necessary, it suffices to restrict
our attention to $\mu \in (0.5, 1]$. We define the random variables
$X_{c}^{i}$ and $Y_c$ as in the previous section. However, now
$\PP\rbr{X_{c}^{i}}$ depends on the number of 1's in the binary
representation of $c$.  In particular, if $c=2^d=n$ then
$\PP\rbr{X_{n}^{i}} = \mu^{\log_{2}(n)}$ and
$Y_n=n\mu^{\log_{2}(n)}$. Furthermore, $\PP\rbr{X_{c}^{i}} < \mu^{\log_{2}(n)}$
for every $c \neq n$. Therefore, when $\mu$ is close to $1$ and $n$ is
large, $B := \max_c Y_c$ equals $Y_n = n \mu^{\log_{2}(n)}$ with high
probability (see Figure~\ref{fig:binnums}). The expected running time of
our algorithm then becomes $(n^{\log_{2} (\mu) + 1} \log_{2}(n) |E|)$.

\begin{figure}
  \begin{tikzpicture}[scale=0.49]
    \begin{axis}[
      legend pos=south east,
      width=250pt, height=200pt, no markers,
      xlabel={number of nodes}, ylabel={size of the partition},
      title={{\Large $\mu = 0.55$}}
      ]

      \addplot+[
      color=blue, mark=x, thick,
      error bars/.cd,
      y dir=both, y explicit,
      error mark=-]
      table[x=x,y=y,y error=errory]
      {partnums_0.55.dat};

      \addplot+[
      color=red, no markers, thick, dashed]
      table[x=x,y=log2n]
      {partnums_0.55_theory.dat};

      \addplot+[
      color=black, no markers, thick, dashed]
      table[x=x,y=nmud]
      {partnums_0.55_theory.dat};

      \legend{Observed, $\log_{2}(n)$, $n\mu^{d}$}
    \end{axis}
  \end{tikzpicture}
  \begin{tikzpicture}[scale=0.49]
    \begin{axis}[
      legend pos=south east,
      width=250pt, height=200pt, no markers,
      xlabel={number of nodes}, 
      title={{\Large $\mu = 0.60$}}
      ]

      \addplot+[
      color=blue, mark=x, thick,
      error bars/.cd,
      y dir=both, y explicit,
      error mark=-]
      table[x=x,y=y,y error=errory]
      {partnums_0.60.dat};

      \addplot+[
      color=red, no markers, thick, dashed]
      table[x=x,y=log2n]
      {partnums_0.60_theory.dat};

      \addplot+[
      color=black, no markers, thick, dashed]
      table[x=x,y=nmud]
      {partnums_0.60_theory.dat};

      \legend{Observed, $\log_{2}(n)$, $n\mu^{d}$}
    \end{axis}
  \end{tikzpicture}

  \begin{tikzpicture}[scale=0.49]
    \begin{axis}[
      legend pos=south east,
      width=250pt, height=200pt, no markers,
      xlabel={number of nodes}, 
      ylabel={size of the partition},
      title={{\Large $\mu = 0.70$}}
      ]

      \addplot+[
      color=blue, mark=x, thick,
      error bars/.cd,
      y dir=both, y explicit,
      error mark=-]
      table[x=x,y=y,y error=errory]
      {partnums_0.70.dat};

      \addplot+[
      color=red, no markers, thick, dashed]
      table[x=x,y=log2n]
      {partnums_0.70_theory.dat};

      \addplot+[
      color=black, no markers, thick, dashed]
      table[x=x,y=nmud]
      {partnums_0.70_theory.dat};

      \legend{Observed, $\log_{2}(n)$, $n\mu^{d}$}
    \end{axis}
  \end{tikzpicture}
  \begin{tikzpicture}[scale=0.49]
    \begin{axis}[
      legend pos=south east,
      width=250pt, height=200pt, no markers,
      xlabel={number of nodes},
      title={{\Large $\mu = 0.90$}}
      ]

      \addplot+[
      color=blue, mark=x, thick,
      error bars/.cd,
      y dir=both, y explicit,
      error mark=-]
      table[x=x,y=y,y error=errory]
      {partnums_0.90.dat};

      \addplot+[
      color=red, no markers, thick, dashed]
      table[x=x,y=log2n]
      {partnums_0.90_theory.dat};

      \addplot+[
      color=black, no markers, thick, dashed]
      table[x=x,y=nmud]
      {partnums_0.90_theory.dat};

      \legend{Observed, $\log_{2}(n)$, $n\mu^{d}$}
    \end{axis}
  \end{tikzpicture}
  \caption{Number of nodes vs. size of the partition,
    when $\mu^{(k)}$'s are all set to be 0.55, 0.60, 0.70,
    and 0.90. Again, 10 number of $F$ matrices were
    sampled for each $n$ and $\mu$, and the average
    size of the partition is taken.
    For small values of $n$, $n \mu^d$ approximation is not
    tight but the observed value is sandwiched between
    $\log_{2}(n)$ and $n \mu^d$. For $\mu > 0.70$,
    the $n\mu^d$ approximation is tight.}
  \label{fig:binnums}
\end{figure}
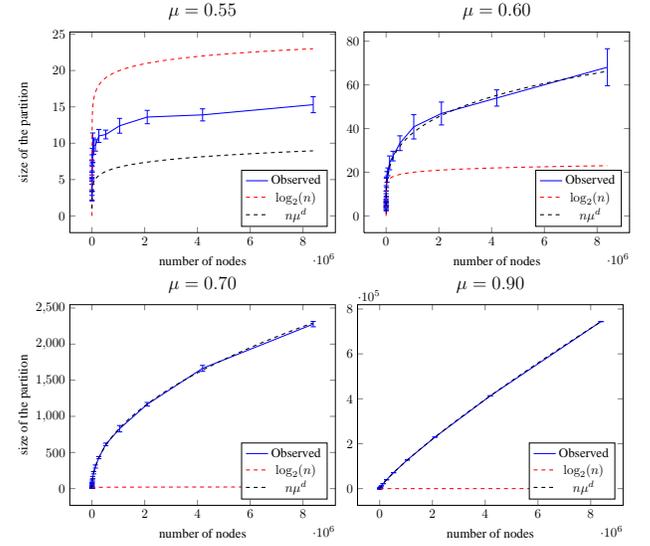

\subsection{Handling the Case When $n \neq 2^d$}
\label{sec:case_logn_neq_2d}

To simplify the analysis assume $\mu^{(1)} = \cdots = \mu^{(k)} =
0.5$.

\paragraph{$n > 2^d$ Case}
First, consider the case $n > 2^d$, and let $d' := \lceil \log_{2}(n)
\rceil$, $d'' := \lfloor \log_{2}(n) \rfloor$. Each $Y_c$ now converges
to the Poisson distribution with parameter $\frac n {2^d}$, and using
the Chernoff bound again, one can prove that $B = O(2^{d' - d}
\log_{2}(n))$ with high probability, for large $n$.  For details, refer
to Appendix~\ref{sec:B_upperbound}.

Therefore, the expected time complexity is bounded by $O((\log_{2}(n))^2
d |E|)$, and the algorithm gets faster as $d$ decreases.

\paragraph{$n < 2^d$ Case}
For the sake of completeness we will make some remarks for the case when
$n < 2^d$. However, \citet{KimLes10} and \citet{KimLes11} report that in
practice $d \approx \log_{2}(n)$ usually results in the most realistic
graphs. In general, for large $d$ the value of $B$ is small, but the
number of edges in graphs sampled by Algorithm~\ref{alg:kpgmsample}
,which is called by line~\ref{alg:kpgmsample_call} of
Algorithm~\ref{alg:magsample}, increases exponentially with
$d$. Therefore, the overall complexity of the algorithm is at least
$\Omega( 4^{d - d''} \EE[|E|])$, and thus naively applying
Algorithm~\ref{alg:magsample} is not advantageous when $d - d''$ is
high. Our experiments in Section~\ref{sec:Effectd} confirm this
behavior. 

\section{Speeding up the Algorithm}
\label{sec:speed_up}

The key to speeding up our algorithm for the case when $\mu^{(k)} \neq
0.5$ is the following observation: When $\mu^{(k)}$ approaches 0 or 1,
the number of distinct attribute configurations reduces
significantly. For instance, when $\mu^{(k)}$ approaches 1 attribute
configurations which contain more 1's in their binary representation are
generated with greater frequency. Similarly, when $\mu^{(k)}$ approaches
0 the attribute configurations which contain more 0's in their binary
representation are preferred. Figure~\ref{fig:color_freq} represents
this phenomenon visually.

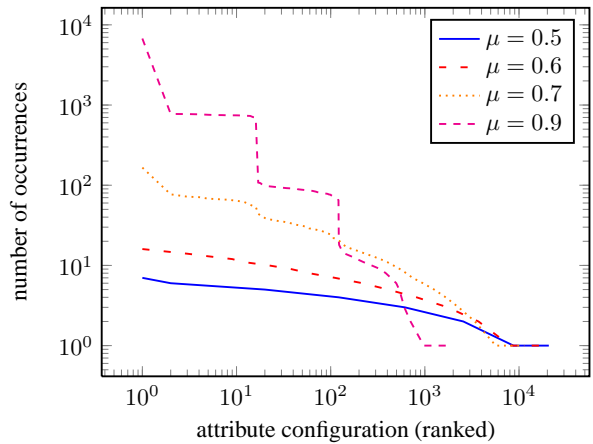
\begin{figure}
  \begin{tikzpicture}[scale=0.9]
    \begin{loglogaxis}[
      legend pos=north east,
      width=250pt, height=200pt, 
      xlabel={attribute configuration (ranked)},
      ylabel={number of occurrences},
      thick, no markers
      ]

      \addplot[solid, color = blue]
      table[x=xs,y=ys]{color_freq_0.5.txt};
      \addplot[loosely dashed, color=red]
      table[x=xs,y=ys]{color_freq_0.6.txt};
      \addplot[dotted, color=orange]
      table[x=xs,y=ys]{color_freq_0.7.txt};
      \addplot[dashed, color=magenta]
      table[x=xs,y=ys]{color_freq_0.9.txt};

      \legend{$\mu = 0.5$, $\mu = 0.6$, $\mu = 0.7$, $\mu = 0.9$}
    \end{loglogaxis}
  \end{tikzpicture}
  \caption{We rank attribute configurations based on their frequency of
    occurrence and plot them for different values of $\mu$. We fixed $d
    = 15$, and $n = 2^{15}$ for this plot. When $\mu = 0.5$, the graph
    is very flat since every attribute configuration has the same
    probability $\frac 1 {2^d}$ of being sampled. On the other hand,
    when $\mu = 0.9$, the probability mass is very concentrated on a
    small number of configurations. Note that this is a log-log plot.}
  \label{fig:color_freq}
\end{figure}

We select a number $B'$ (see below) and collect all nodes $i$ whose
attribute configuration $\lambda_i$ occurs at most $B'$ times in the set
$\cbr{\lambda_1, \lambda_2, \ldots, \lambda_n}$ into a set $W$. Since
each attribute configuration occurs at most $B'$ times in $W$, the
sub-graph corresponding to the nodes in $W$ can be sampled in
$O\rbr{B'^2 \log_{2}(n) \abr{E}}$ by using Algorithm
\ref{alg:magsample}. 

We partition the nodes whose attribute configuration occurs more than
$B'$ times into sets $\hat{D}_{1}, \ldots, \hat{D}_{R}$ such that the
attribute configuration of each node in $\hat{D}_{i}$ is the same, say
$\lambda'_i$. The sub-graph corresponding to each $\hat{D}_{i}$ is an
uniform random graph with probability of an edge being equal to
$P_{\lambda'_i, \lambda'_i}$. On the other hand, the sub-graph
corresponding to nodes $\hat{D}_{i}$ and $\hat{D}_{j}$ for $i \neq j$ is
also an uniform random graph with the probability of an edge being equal
to $P_{\lambda'_i, \lambda'_j}$. Finally, the sub-graph corresponding to a
node $i'$ in $W$ and the set $\hat{D}_{j}$ is an uniform random graph
with the probability of an edge being equal to $P_{\lambda_{i'},
  \lambda'_j}$. These sub-graphs can be sampled with $O((|W|+d)R+|E|)$, 
$O(dR + |E|)$, and $O(dR^2 + |E|)$ effort 
respectively\footnote{ Instead of sampling $k$
  i.i.d.\ Bernoulli random variables $X_1, X_2, \ldots, X_k$ with
  parameter $p$, we use a geometric distribution with parameter $p$ to
  sample to generate random variables $K_j$ such that $1 \leq K_1 < K_2
  < \ldots \leq k$, and set $X_{K_{j}} = 1$.}.

It remains to discuss how to choose the parameter $B'$.  Towards this
end let
\begin{align*}
  T(B') = B'^2 \log(n) \abr{E} + (|W|+d)R + d R^2.
\end{align*}
Then, the overall time complexity of our algorithm is $O(T(B'))$.
We calculate $T(B')$ for every $B'$,
and choose the value which minimizes $T(B')$.
Since there are only $n$ distinct values of $B'$,
this procedure requires $O(n)$ time.

\section{Experiments}
\label{sec:experiment}

We empirically evaluated the efficiency and scalability of our sampling
algorithm. Our experiments are designed to answer the following
questions: Does our algorithm produce graphs with similar
characteristics as observed by \citet{KimLes10}. How does our algorithm
scale as a function of $n$, the number of nodes in the
graph. Furthermore, since our theoretical analysis assumed $\mu=0.5$ and
$d = \log_{2}(n)$ we were interested in the following additional questions:
How does the algorithm behave for $\mu \neq 0.5$. How does our algorithm
scale when the number of features $d$ is different from $\log_{2}(n)$.

Our code is implemented in C++ and will be made available for download
from \url{http://www.stat.purdue.edu/~yun3}. All experiments are run on a machine with
a 2.1 GHz processor 
running Linux. For the first three
experiments we uniformly set $n = 2^d$, where $n$ is the number of nodes
in the graph and $d$ is the dimension of the features. We used the same
$\Theta$ matrices at all levels, that is, we set $\Theta = \Theta^{(1)}
= \Theta^{(2)} = \cdots = \Theta^{(k)}$.  Furthermore, we experimented
with the following $\Theta$ matrices used by \citet{KimLes10} and
\citet{MorNev09} respectively:
\begin{align}
  \Theta_1 =
  \mymatrix{cc}{
    0.15 & 0.7 \\
    0.7 & 0.85
  } \text{ and }
  \Theta_2 =
  \mymatrix{cc}{
    0.35 & 0.52 \\
    0.52 & 0.95
  }
\end{align}

\subsection{Properties of the Generated Graphs}
\label{sec:PropGenerGraphs}


Theorem~\ref{thm:correctness} guarantees that our algorithm generates
valid graphs from the MAGM. In our first experiment we also verify this
claim empirically. Towards this end, we set $\mu = 0.5$ and generated
graphs of size $n=2^d$ for various values of $d$. For each $n$ we
repeated the sampling procedure 10 times and studied various properties
of the generated graphs. 
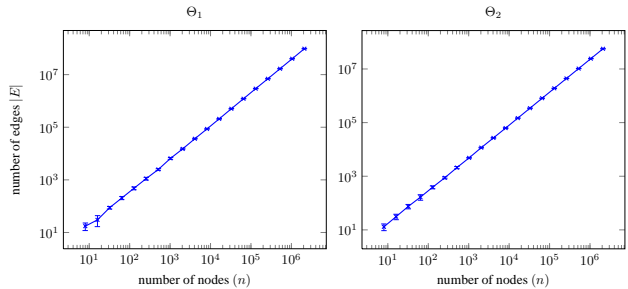
\begin{figure}
  \begin{center}
    \begin{tikzpicture}[scale=0.51]

      \begin{loglogaxis}[
        title={$\Theta_1$}, xlabel={number of nodes $(n)$},
        ylabel={number of edges $|E|$},
        cycle list name=black white]

        \addplot+[
        color=blue, mark=x, thick,
        error bars/.cd,
        y dir=both, y explicit,
        error mark=-]
        table[x=x,y=y,y error=errory]
        {cluster_prop_numedges_1.dat};

      \end{loglogaxis}
    \end{tikzpicture}
    \begin{tikzpicture}[scale=0.51]
      \begin{loglogaxis}[
        title={$\Theta_2$}, xlabel={number of nodes $(n)$},
        cycle list name=black white]

        \addplot+[
        color=blue, mark=x, thick,
        error bars/.cd,
        y dir=both, y explicit,
        error mark=-]
        table[x=x,y=y,y error=errory]
        {cluster_prop_numedges_2.dat};

      \end{loglogaxis}
    \end{tikzpicture}
  \end{center}
  \caption{The number of edges $|E|$ as a function of the size $n$ of
    the graphs sampled from the MAGM for two different values of
    $\Theta$. The near linear rate of growth on the log-log plots
    confirms the observation that $|E| = n^c$ for some constant $c$. }
  \label{fig:num_edges}
\end{figure}
As reported by \citet{KimLes10}, the number of
edges $|E|$ in the graphs generated by MAGM grow as $|E| = n^c$ for some
constant $c$. Graph samples generated by our algorithm also confirm to
this observation, as can be seen in
Figure~\ref{fig:num_edges}. Furthermore, \citet{KimLes10} report that
the proportion of nodes in the largest strong component increases
asymptotically to $1$. We also observe this behavior in the samples
generated by our algorithm (see
Figure~\ref{fig:strong_component}). These experiments indeed confirm
that our algorithm samples valid graphs from the MAGM.

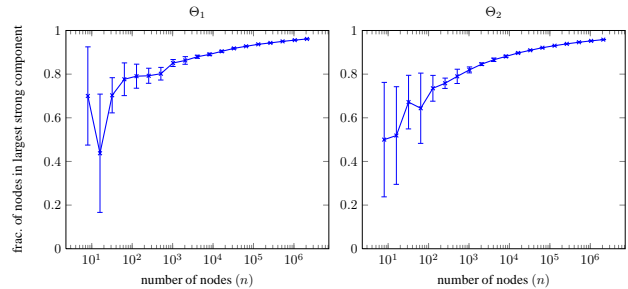
\begin{figure}
  \begin{center}
    \begin{tikzpicture}[scale=0.51]
      \begin{semilogxaxis}[
        ymin = 0, ymax = 1,
        title={$\Theta_1$}, xlabel={number of nodes $(n)$},
        ylabel={frac.\ of nodes in largest strong component},
        cycle list name=black white]

        \addplot+[
        color=blue, mark=x, thick,
        error bars/.cd,
        y dir=both, y explicit,
        error mark=-]
        table[x=x,y=y,y error=errory]
        {cluster_prop_gscc_1.dat};

      \end{semilogxaxis}
    \end{tikzpicture}
    \begin{tikzpicture}[scale=0.51]
      \begin{semilogxaxis}[
        ymin=0, ymax=1,
        title={$\Theta_2$}, xlabel={number of nodes $(n)$},
        cycle list name=black white]

        \addplot+[
        color=blue, mark=x, thick,
        error bars/.cd,
        y dir=both, y explicit,
        error mark=-]
        table[x=x,y=y,y error=errory]
        {cluster_prop_gscc_2.dat};

      \end{semilogxaxis}
    \end{tikzpicture}
  \end{center}
  \caption{The fraction of nodes in the largest strong component as a
    function of the size $n$ of the graphs sampled from the MAGM for two
    different values of $\Theta$. Asymptotically, the fraction of edges
    approaches $1$ implying that the entire graph is part of the same
    strong component. }
  \label{fig:strong_component}
\end{figure}

\subsection{Scalability}
\label{sec:Scalability}

To study the scalability of our algorithm we fixed $\mu = 0.5$ and
generated 10 graphs of size $n=2^d$ for various values of $d$.
Figure~\ref{fig:compare_naive} compares the running time of our
algorithm vs a naive scheme which uses $n^2$ independent Bernoulli
trials based on the entries of the edge probability matrix.

Note that using the naive sampling scheme we could not sample graphs
with more than 262,144 nodes in less than 8 hours. In contrast, the
running time of our algorithm grows significantly slower than $O(n^2)$
and consequently we were able to comfortably sample graphs with a
million nodes in less than twenty minutes. The largest graphs produced
by our algorithm contain over 8 million nodes (8,388,608) and 20 billion
edges.  In fact these graphs are, to the best of our knowledge, at least
32 times larger than the largest MAGM graphs reported in literature in
terms of number of nodes.  
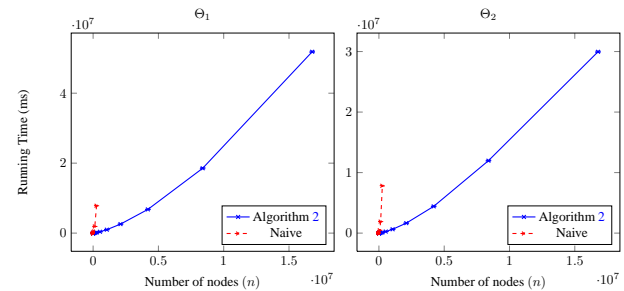
\begin{figure}
  \begin{center}
    \begin{tikzpicture}[scale=0.51]
      \begin{axis}[title=$\Theta_1$,
        legend pos=south east,
        xlabel={Number of nodes $(n)$},
        ylabel={Running Time (ms)}]

        \addplot+[
        color=blue, mark=x, thick,
        error bars/.cd,
        y dir=both, y explicit,
        error mark=-]
        table[x=x,y=y,y error=errory]
        {cluster_mu50_1.dat};

        \addplot+[
        color=red, mark=x, thick, dashed,
        error bars/.cd,
        y dir=both, y explicit,
        error mark=-]
        table[x=x,y=y,y error=errory]
        {cluster_mu50_naive_1.dat};
        \legend{Algorithm~\ref{alg:magsample}, Naive}
      \end{axis}
    \end{tikzpicture}
    \begin{tikzpicture}[scale=0.51]
      \begin{axis}[title=$\Theta_2$,
        legend pos=south east,
        xlabel={Number of nodes $(n)$}
        ]

        \addplot+[
        color=blue, mark=x, thick,
        error bars/.cd,
        y dir=both, y explicit,
        error mark=-]
        table[x=x,y=y,y error=errory]
        {cluster_mu50_2.dat};

        \addplot+[
        color=red, mark=x, thick, dashed,
        error bars/.cd,
        y dir=both, y explicit,
        error mark=-]
        table[x=x,y=y,y error=errory]
        {cluster_mu50_naive_2.dat};
        \legend{Algorithm~\ref{alg:magsample}, Naive}
      \end{axis}
    \end{tikzpicture}
  \end{center}
  \caption{ Comparison of running time (in milliseconds) of our
    algorithm vs the naive sampling scheme as a function of the size $n$
    of the graphs sampled from the MAGM for two different values of
    $\Theta$. }
  \label{fig:compare_naive}
\end{figure}
Furthermore, we observed that our algorithm
exhibits the same behavior across a range of $\Theta$ values (not
reported here). To further demonstrate the scalability of our algorithm
we plot the running time of our algorithm normalized by the number of
edges in the graph in Figure~\ref{fig:compare_peredge}. Across a range
of $n$ values, our algorithm spends a constant time for each edge that
is generated. We can therefore conclude that the running time of our
algorithm grows empirically as $O(|E|)$.
\begin{figure}
  \begin{center}

    \begin{tikzpicture}[scale=0.51]
      \begin{axis}[title=$\Theta_1$,
        legend pos=north east,
        xlabel={Number of nodes $(n)$},
        ylabel={Running time per edge}
        ]

        \addplot+[
        color=blue, mark=x, thick,
        error bars/.cd,
        y dir=both, y explicit,
        error mark=-]
        table[x=x,y=y,y error=errory]
        {cluster_mu50_peredge_1.dat};

        \addplot+[
        color=red, mark=x, thick, dashed,
        error bars/.cd,
        y dir=both, y explicit,
        error mark=-]
        table[x=x,y=y,y error=errory]
        {cluster_mu50_naive_peredge_1.dat};
        \legend{Algorithm~\ref{alg:magsample}, Naive}
      \end{axis}
    \end{tikzpicture}
    \begin{tikzpicture}[scale=0.51]
      \begin{axis}[title=$\Theta_2$,
        legend pos=north east,
        xlabel={Number of nodes $(n)$}
        ]

        \addplot+[
        color=blue, mark=x, thick,
        error bars/.cd,
        y dir=both, y explicit,
        error mark=-]
        table[x=x,y=y,y error=errory]
        {cluster_mu50_peredge_2.dat};

        \addplot+[color=red, mark=x, thick, dashed,
        error bars/.cd,
        y dir=both, y explicit,
        error mark=-]
        table[x=x,y=y,y error=errory]
        {cluster_mu50_naive_peredge_2.dat};
        \legend{Algorithm~\ref{alg:magsample}, Naive}
      \end{axis}
    \end{tikzpicture}
  \end{center}
  \caption{Running time per each edge (in milliseconds) for our
    algorithms vs the naive algorithm as a function of $n$ the number of
    nodes. Note that the normalized running time of our algorithm is
    nearly constant and does not change as $n$ increases.}
  \label{fig:compare_peredge}
\end{figure}
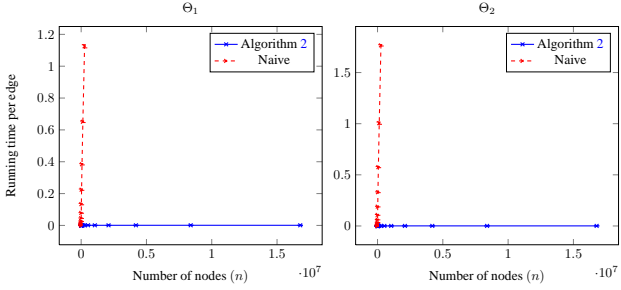

\subsection{Effect of $\mu$}
\label{sec:Effectmu}

Our theoretical analysis (Theorem~\ref{thm:scalability}) guarantees the
scalability of our sampling algorithm for $\mu = 0.5$. In this section
we explore empirically how the running time of our algorithm varies as
we vary $\mu$. Towards this end we define and study the relative running
time
$\rho(\mu) := \frac{T(\mu)} {T(0.5)}$,
where $T(\mu)$ denotes the running time of the algorithm as a function
of $\mu$. In Figure~\ref{fig:effect_mu} we plot $\rho(\mu)$ for
different values of $n=2^d$.

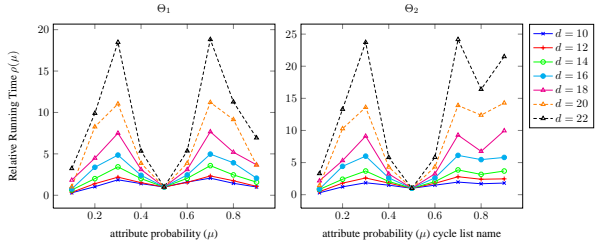
\begin{figure}
  \begin{center}
    \begin{tikzpicture}[scale=0.43]
      \begin{axis}[
        legend style={
          cells={anchor=east},
          legend pos=outer north east,
        },
        title={$\Theta_1$}, xlabel={attribute probability
          ($\mu$)}, ylabel={Relative Running Time $\rho({\mu})$},
        cycle list name=black white]
        \addplot+[color=blue,mark=x,thick] table[x=x,y=y]
        {mus_d10_1.dat};
        \addplot+[color=red,mark=+,thick] table[x=x,y=y]
        {mus_d12_1.dat};
        \addplot+[color=green,mark=o,thick] table[x=x,y=y]
        {mus_d14_1.dat};
        \addplot+[color=cyan,mark=*,thick] table[x=x,y=y]
        {mus_d16_1.dat};
        \addplot+[color=magenta,mark=triangle,thick] table[x=x,y=y]
        {mus_d18_1.dat};
        \addplot+[color=orange,mark=triangle,thick] table[x=x,y=y]
        {mus_d20_1.dat};
        \addplot+[color=black,mark=triangle,thick] table[x=x,y=y]
        {mus_d22_1.dat};
      \end{axis}
    \end{tikzpicture}
    \begin{tikzpicture}[scale=0.43]
      \begin{axis}[
        legend style={
          cells={anchor=east},
          legend pos=outer north east,
        },
        title={$\Theta_2$}, xlabel={attribute probability
          ($\mu$)}
        cycle list name=black white]
          \addplot+[color=blue,mark=x,thick] table[x=x,y=y]
          {mus_d10_2.dat};
          \addplot+[color=red,mark=+,thick] table[x=x,y=y]
          {mus_d12_2.dat};
          \addplot+[color=green,mark=o,thick] table[x=x,y=y]
          {mus_d14_2.dat};
          \addplot+[color=cyan,mark=*,thick] table[x=x,y=y]
          {mus_d16_2.dat};
          \addplot+[color=magenta,mark=triangle,thick] table[x=x,y=y]
          {mus_d18_2.dat};
          \addplot+[color=orange,mark=triangle,thick] table[x=x,y=y]
          {mus_d20_2.dat};
          \addplot+[color=black,mark=triangle,thick] table[x=x,y=y]
          {mus_d22_2.dat};
          \legend{$d=10$, $d=12$, $d=14$, $d=16$, $d=18$,
          $d=20$, $d=22$}
      \end{axis}
    \end{tikzpicture}
  \end{center}
  \caption{Relative running time $\rho(\mu)$ for two different values of
    $\Theta$.}
  \label{fig:effect_mu}
\end{figure}

As expected, our algorithm performs well for $\mu = 0.5$, 
in which case the size of partition $B$ is bounded by $\log_{2}(n)$ with high
probability.
Similarly, when $\mu \approx 0$ or $1$ the attribute
configurations have significantly less diversity and hence sampling
becomes easy. However, there is also a tendency that the running time
increases as $\mu$ increases. This is because the number of edges is
also a function of $\mu$, and due to our choice of $\Theta$ it is an
increasing function.  This phenomenon is more conspicuous for $\Theta_2$
than $\Theta_1$, since $\theta_{11}$ of the former is larger than that
of the latter.

One may also be interested in
$\rho_{\max} := \max_{0 \leq \mu \leq 1} \rho(\mu)$.
In order to estimate $\rho_{\max}$ we let $\mu \in \cbr{0.1, 0.2,
  \ldots, 0.9}$ and plotted the worst value of $\rho(\mu)$ as a function
of $n$ the number of nodes in Figure~\ref{fig:worst_mu}. 
In all cases
$\rho_{\max}$ was attained for $\mu=0.7$ or $\mu=0.9$.
It is empirically seen that the factor $\rho(\mu)$ increases 
as the number of nodes $n$ increases, but the speed of growth is
reasonably slow such that still the sampling of graphs with 
millions of nodes is feasible for any value of $\mu$.



\begin{figure}
  \begin{center}
    \begin{tikzpicture}[scale=0.51]
      \begin{axis}[title={$\Theta_1$}, 
        xlabel={number of nodes $(n)$},
        ylabel={estimated $\rho_{\max}$}, 
        legend pos=south east,
        domain=0:4000000,
        samples=4800
        ]
        
        \addplot+[color=blue, mark=x, thick]
        table[x=x,y=y,y error=errory]
        {cluster_mus_1.dat};
        \addplot[color=red, thick, dashed]{ln(x) * ln(x) * ln(x) * ln(x) /ln(2)/ln(2)/6000};
        \legend{Observed, $\rbr{\log_{2}(n)}^4$}
      \end{axis}
    \end{tikzpicture}
    \begin{tikzpicture}[scale=0.51]
      \begin{axis}[title={$\Theta_2$}, 
        xlabel={number of nodes $(n)$},
        legend pos=south east,
        domain=0:4000000,
        samples=5000
        ]
        
        \addplot+[color=blue, mark=x, thick]
        table[x=x,y=y,y error=errory]
        {cluster_mus_2.dat};
        \addplot[color=red, thick, dashed]{ln(x) * ln(x) * ln(x) *
          ln(x) /ln(2)/ln(2)/4600};
        \legend{Observed, $\rbr{\log_{2}(n)}^4$}
      \end{axis}
    \end{tikzpicture}
  \end{center}
  \caption{Estimated $\rho_{\max}$ as a function of the number of nodes
    for two different values of $\Theta$. }
  \label{fig:worst_mu}
\end{figure}
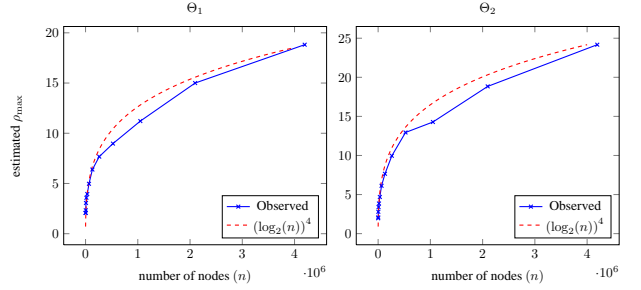

\subsection{Effect of $d$}
\label{sec:Effectd}

Now we study how the performance of our model varies as $d$ changes.
In particular we fix $n=2^{15}$ and vary $d$ to investigate its
effect on running time of our algorithm.  Figure~\ref{fig:effect_dim}
shows that there is no significant difference in the running time for $d
\leq \log_{2}(n)$. However, as we explained in
Section~\ref{sec:case_logn_neq_2d}, the running time of our algorithm
increases exponentially when $d > \log_{2}(n)$.
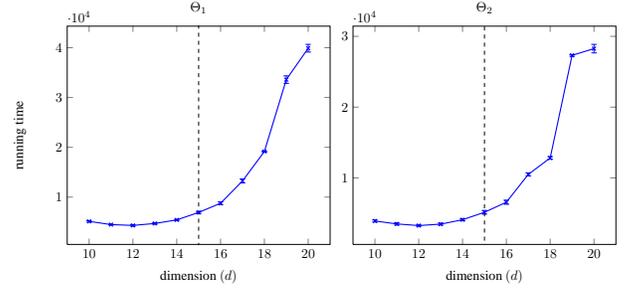
\begin{figure}
  \begin{center}
    \begin{tikzpicture}[scale=0.51]
      \begin{axis}[
        title={$\Theta_1$}, xlabel={dimension $(d)$},
        ylabel={running time}]

        \addplot+[
        color=blue, mark=x, thick,
        error bars/.cd,
        y dir=both, y explicit,
        error mark=-]
        table[x=x,y=y,y error=errory]
        {cluster_ds_1.dat};

        \draw[black,dashed] (axis cs:15,0) --
        (axis cs:15,100000);
      \end{axis}
    \end{tikzpicture}
    \begin{tikzpicture}[scale=0.51]
      \begin{axis}[
        title={$\Theta_2$}, xlabel={dimension $(d)$}
        ]

        \addplot+[
        color=blue, mark=x, thick,
        error bars/.cd,
        y dir=both, y explicit,
        error mark=-]
        table[x=x,y=y,y error=errory]
        {cluster_ds_2.dat};

        \draw[black,dashed] (axis cs:15,0) --
        (axis cs:15,100000);
      \end{axis}
    \end{tikzpicture}
  \end{center}
  \caption{The effect of dimension $d$ on the running time, where other
    parameters are fixed as $\mu = 0.5$ and $n = 2^{15}$.  $d = 15$
    (when $n = 2^d$) is highlighted by the dashed line.  }

  \label{fig:effect_dim}
\end{figure}

\section{Conclusion}
\label{sec:conclusion}

We introduced the first sub-quadratic algorithm for efficiently sampling
graphs from the MAGM. Under technical conditions, 
the expected running time
of our algorithm is $O\rbr{ \rbr{\log_{2}(n)}^{3} \abr{E}}$.  Our
algorithm is very scalable and is able to produce graphs with
approximately 8 million nodes in under 6 hours.  Even when the technical
conditions of our analysis are not met, our algorithm scales
favorably. We are currently working on rigorously proving the
performance guarantees for the case when $\mu \neq 0.5$.

Efficiently sampling MAGM graphs for the case when $d \geq \log_{2}(n)$
remains open. We are currently investigating how high-dimensional
similarity search techniques such as locality sensitive hashing (LSH) or
inverse indexing can be applied to this problem.




\newpage
\bibliographystyle{plainnat}
\bibliography{bibfile}
\newpage

.\\

\newpage

\appendix

\section{Technical Proofs}
\label{sec:TechnicalProofs}
\paragraph{Proof of Theorem~\ref{thm:optimal_partition}}
\label{sec:ProofTheorem}

Let $i' := \argmax_i \abr{Z_i}$. The attribute configuration
$\lambda_{i'}$ corresponding to node $i'$ appears at least $B$ times in
the set $\cbr{\lambda_1, \ldots, \lambda_{n}}$. By the pigeon-hole
principle any partition of the set $\cbr{1, \ldots, n}$ which contains
less than $B$ sets must have a set which contains two nodes $i$ and $j$
with attribute configuration $\lambda_i = \lambda_j =
\lambda_{i'}$. Therefore, the number of sets in the partition must be at
least $B$. Our partitioning scheme produces exactly $B$ sets, and is
therefore optimal.

\paragraph{Proof of Theorem~\label{thm:correctness}}
\label{sec:ProofTheorem-1}

By definition,
\begin{align}
  A_{i,j} = \sum_{1 \leq k,l \leq B} A_{i,j}^{(k,l)}.
\end{align}
To prove the theorem, we first show that $A_{i,j} = A_{\lambda_i,
  \lambda_j}'^{|Z_i|, |Z_j|}$.  This is straightforward from definition
\eqref{eq:unpermute}, since $D_1, \ldots, D_B$ is a partition of nodes
and thus $i \in D_k$, $j \in D_l$ for only $k = |Z_i|$ and $l = |Z_j|$.
This also implies
\begin{align}
  \PP\rbr{A_{ij} = 1 \mid \Thetat, \lambda_1, \ldots, \lambda_n}
  &=
  A_{\lambda_i, \lambda_j}'^{|Z_i|, |Z_j|} \nonumber \\
  &= P_{\lambda_i, \lambda_j} = Q_{i,j}, \nonumber
\end{align}
using \eqref{eq:kpgm_mag_con}.

To prove independence, we show that if $(i,j) \neq (i',j')$, then
$(\lambda_i, \lambda_j) \neq (\lambda_{i'}, \lambda_{j'})$ or $(|Z_i|,
|Z_j|) \neq (|Z'_i|, |Z'_j|)$.  Since we already showed $A_{i,j} =
A_{\lambda_i, \lambda_j}'^{|Z_i|, |Z_j|}$, this implies independence of
$A_{i,j}$ to other entries in $A$.

Now, suppose $(\lambda_i, \lambda_j) = (\lambda_{i'}, \lambda_{j'})$,
since if it does not hold there is nothing to prove.  Because $(i,j)
\neq (i',j')$ by assumption, at least one of $i \neq i'$ or $j \neq j'$
is true.  Without loss of generality, suppose $i \neq i'$.  Then, since
$\lambda_i = \lambda_{i'}$, $|Z_i| \neq |Z_{i'}|$ from definition of
$Z_i$.

\section{Chernoff Bound of Poisson Distribution}

\begin{theorem}
  \label{thm:chernoff}
  Let $X$ be the random variable which is distributed
  as Poisson distribution of parameter $\lambda$.
  Then,
  \begin{align}
    P(X \geq x) \leq \frac{e^{-\lambda} (e\lambda)^x}{x^x}.
    \label{eq:chernoff}
  \end{align}
\end{theorem}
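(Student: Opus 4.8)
The plan is to apply the standard exponential-moment (Chernoff) argument. For any $t > 0$, the map $u \mapsto e^{tu}$ is increasing, so Markov's inequality yields
\begin{align*}
  P(X \geq x) = P\rbr{e^{tX} \geq e^{tx}} \leq e^{-tx}\,\EE\sbr{e^{tX}}.
\end{align*}
The first step is therefore to evaluate the moment generating function of a Poisson random variable of parameter $\lambda$. Using the series expansion of the Poisson mass function,
\begin{align*}
  \EE\sbr{e^{tX}} = \sum_{k=0}^{\infty} e^{tk}\,\frac{e^{-\lambda}\lambda^k}{k!} = e^{-\lambda}\sum_{k=0}^{\infty}\frac{(\lambda e^t)^k}{k!} = e^{\lambda(e^t - 1)}.
\end{align*}

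Substituting this into the Markov bound gives, for every $t > 0$, the family of bounds $P(X \geq x) \leq \exp\rbr{\lambda(e^t - 1) - tx}$. The next step is to optimize over $t$: writing $\phi(t) := \lambda(e^t - 1) - tx$, we have $\phi'(t) = \lambda e^t - x$, so the unconstrained minimizer is $t^\star = \ln(x/\lambda)$. Plugging $e^{t^\star} = x/\lambda$ back in yields $\phi(t^\star) = (x - \lambda) - x\ln(x/\lambda)$, and exponentiating,
\begin{align*}
  e^{\phi(t^\star)} = e^{x-\lambda}\rbr{\frac{\lambda}{x}}^{x} = \frac{e^{-\lambda}(e\lambda)^x}{x^x},
\end{align*}
which is exactly the claimed right-hand side of \eqref{eq:chernoff}.

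The one point that needs care --- and the only real obstacle --- is the sign constraint $t > 0$ that the Markov step requires. The chosen $t^\star = \ln(x/\lambda)$ is positive precisely when $x > \lambda$, which is exactly the upper-tail regime in which the statement is intended to be applied (in the paper $\lambda = 1$ and $x = \log_2(n) > 1$). At the boundary $x = \lambda$ the bound evaluates to $1$ and holds trivially, so the argument covers all $x \geq \lambda$; for $x < \lambda$ the optimizer would be negative and the displayed argument does not apply, so I would simply note that the inequality is understood for thresholds $x \geq \lambda$. Apart from this regime check, the proof is the routine Chernoff computation above, the only algebra being the simplification $e^{x-\lambda}(\lambda/x)^x = e^{-\lambda}(e\lambda)^x/x^x$.
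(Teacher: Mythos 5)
Your derivation is correct, and it is exactly the argument the paper implicitly relies on: the paper states Theorem~\ref{thm:chernoff} without any proof, simply invoking it as the ``standard Chernoff bound'' for the Poisson distribution, and your Markov-plus-moment-generating-function computation with the optimizer $t^\star = \ln(x/\lambda)$ is that canonical derivation, with the algebra $e^{x-\lambda}(\lambda/x)^x = e^{-\lambda}(e\lambda)^x/x^x$ checking out. Your regime caveat is also a genuine (and worthwhile) correction to the statement as printed: the inequality really does fail for $x < \lambda$ (at $x = 0$ the left side is $1$ while the right side is $e^{-\lambda} < 1$), so the restriction $x \geq \lambda$ is needed, and it is satisfied in both places the paper uses the bound --- $\lambda = 1$, $x = \log_{2}(n)$ in the balanced-attribute analysis, and $\lambda = n/2^d \leq 2^{t+1}$, $x = 2^{t+1}\log_{2}(n)$ in Appendix~\ref{sec:B_upperbound}.
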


\section{Upper bound of size of the partition when $n > 2^d$}

\label{sec:B_upperbound}

As a binomial distribution with finite mean in limit,
$Y_c$ is approximately distributed as
a Poisson distribution of parameter $\frac n {2^d}$.
To use Chernoff bound \eqref{eq:chernoff}
with $\lambda = \frac n {2^d}$, $x = 2^{t+1} \log_{2}(n)$,
$t = d'' - d$,
we first bound each term:
\begin{align}
  e^{-\lambda} = e^{-\frac{n}{2^d}} \leq e^{-2^t},
\end{align}
\begin{align}
  e^x = e^{2^{t+1} \log_{2}(n)} = n^{2^{t+1}},
\end{align}
\begin{align}
  \lambda^x = \left( \frac {n}{2^d} \right)^x
  \leq (2^{t+1})^{2^{t+1} \log_{2}(n)},
\end{align}
\begin{align}
  x^x = (2^{t+1} \log_{2}(n)) ^{2^{t+1} \log_{2}(n)}.
\end{align}
Then, plugging these into \eqref{eq:chernoff},
\begin{align}
  &\PP\rbr{ B = \max Y_c > 2^{t+1} \log_{2}(n) }
  \leq
  \sum_{c=1}^n \PP\rbr{ Y_c > 2^{t+1} \log_{2}(n) } \\
  &\;\;\leq
  n \cdot
  \frac{
    e^{-2^t}
    \cdot
    n^{2^{t+1}}
    \cdot
    (2^{t+1})^{2^{t+1} \log_{2}(n)}
  }
  {(2^{t+1} \log_{2}(n)) ^{2^{t+1} \log_{2}(n)}}
  .
\end{align}
By taking log,
\begin{align}
  \log &\PP\rbr{ B = \max Y_c > 2^{t+1} \log_{2}(n) }
  \leq
  -2^t + 2^{t+1} \log_{2}(n) \nonumber \\
  &+ (2^{t+1} \log_{2}(n)) \log_{2} 2^{t+1}
  - 2^{t+1} \log_{2}(n)
  \log_{2} (2^{t+1} \log_{2}(n)) \nonumber \\
  &=
  -2^t
  + 2^{t+1} \log_{2}(n)
  (1 + \log_{2} 2^{t+1}
  - \log_{2}(2^{t+1} \log n))
  ,
\end{align}
and this goes to $-\infty$ as $n \rightarrow \infty$.
Therefore, $\PP[ B = \max Y_c > 2^{t+1} \log_{2}(n)] \rightarrow 0$.

\end{document}